\title{Action Candidate Based Clipped Double Q-learning for Discrete and Continuous Action Tasks}
\author{
	Haobo Jiang, Jin Xie$^{*}$, Jian Yang$^{*}$ \\
}
\begin{document}
	
\newtheorem{theorem}{Theorem}
\newtheorem{lemma}{Lemma}
\newtheorem{property}{Property}

\maketitle

\begin{abstract}
Double Q-learning is a popular reinforcement learning algorithm in Markov decision process (MDP) problems. 
Clipped Double Q-learning, as an effective variant of Double Q-learning, employs the clipped double estimator to approximate the maximum expected action value.
Due to the underestimation bias of the clipped double estimator, performance of clipped Double Q-learning may be degraded in some stochastic environments. 
In this paper, in order to reduce the underestimation bias, we propose an action candidate based clipped double estimator for Double Q-learning.
Specifically, we first select a set of elite action candidates with the high action values from one set of estimators.
Then, among these candidates, we choose the highest valued action from the other set of estimators.
Finally, we use the maximum value in the second set of estimators to clip the action value of the chosen action in the first set of estimators and the clipped value is used for approximating the maximum expected action value. 
Theoretically, the underestimation bias in our clipped Double Q-learning decays monotonically as the number of the action candidates decreases. 
Moreover, the number of action candidates controls the trade-off between the overestimation and underestimation biases.
In addition, we also extend our clipped Double Q-learning to continuous action tasks via approximating the elite continuous action candidates.
We empirically verify that our algorithm can more accurately estimate the maximum expected action value on some toy environments and yield good performance on several benchmark problems. All code and hyperparameters available at \href{https://github.com/Jiang-HB/AC_CDQ}{https://github.com/Jiang-HB/AC\_CDQ}.
\end{abstract}

\section{Introduction}
In recent years, reinforcement learning has achieved more and more attention.
It aims to learn an optimal policy so that cumulative rewards can be maximized via trial-and-error in an unknown environment \cite{sutton2018reinforcement}.
Q-learning \cite{watkins1992q} is one of widely studied reinforcement learning algorithms.
As a model-free reinforcement learning algorithm, it generates the optimal policy via selecting the action which owns the largest estimated action value.
In each update, Q-learning executes the maximization operation over action values for constructing the target value of Q-function.
Unfortunately, this maximization operator tends to overestimate the action values.
Due to the large positive bias, it is difficult to learn the high-quality policy for the Q-learning in many tasks \cite{thrun1993issues,szita2008many,strehl2009reinforcement}.
Moreover, such overestimation bias also exists in a variety of variants of Q-learning such as fitted Q-iteration \cite{strehl2006pac}, delayed Q-learning \cite{ernst2005tree} and deep Q-network (DQN) \cite{mnih2015human}.

Recently, several improved Q-learning methods have been proposed to reduce the overestimation bias.
Bias-corrected Q-learning \cite{lee2013bias} adds a bias correction term on the target value  so that the overestimation error can be reduced.
Softmax Q-learning \cite{song2019revisiting} and Weighted Q-learning \cite{d2016estimating} are proposed to soften the maximum operation via replacing it with the sum of the weighted action values.
The softmax operation and Gaussian approximation are employed to generate the weights, respectively. 
In Averaged Q-learning \cite{anschel2017averaged} and Maxmin Q-learning \cite{lan2020maxmin}, their target values are constructed to reduce the bias and variance via combining multiple Q-functions.

Double Q-learning \cite{hasselt2010double,van2013estimating,zhang2017weighted} is another popular method to avoid the overestimation bias.
In Double Q-learning, it exploits the online collected experience sample to randomly update one of two Q-functions.
In each update, the first Q-function selects the greedy action and the second Q-function evaluates its value.
Although Double Q-learning can effectively relieve the overestimation bias in Q-learning in terms of the expected value, its target value may occasionally be with the large overestimation bias during training process. 
To avoid it, clipped Double Q-learning \cite{fujimoto2018addressing} directly uses the maximum action value of one Q-function to clip the target value of the Double Q-learning.
Clipping Double Q-learning can be viewed as using the clipped double estimator to approximate the maximum expected value.
However, the clipped double estimator suffers from the large underestimation bias.

In order to reduce the large negative bias of the clipped double estimator, in this paper, we propose an action candidate based clipped double estimator for Double Q-learning.
Specifically, we first select a set of action candidates corresponding to high action values in one set of estimators. 
Then, among these action candidates, we choose the action with the highest value in the other set of estimators. 
At last, the corresponding action value of the selected action in the first set of estimators clipped by the maximum value in the second set of estimators is used to approximate the maximum expected value.
Actually, in clipped Double Q-learning, the selected action from one Q-function is independent of the action evaluation in the other Q-function. Thus, the selected action may correspond to the low action value in the second Q-function, which results in the large underestimation.
Through bridging the gap between the action selection and action evaluation from both Q-functions, our action candidate based clipped Double Q-learning can effectively reduce the underestimation bias. 
Theoretically, the underestimation bias in our clipped Double Q-learning decays monotonically as the number of action candidates decreases. 
Moreover, the number of action candidates can balance the overestimation bias in Q-learning and the underestimation bias in clipped Double Q-learning.
Furthermore, we extend our action candidate based clipped Double Q-learning to the deep version. Also, based on the action candidate based clipped double estimator, we propose an effective variant of TD3 \cite{fujimoto2018addressing} for the continuous action tasks.
Extensive experiments demonstrate that our algorithms can yield good performance on the benchmark problems.

\section{Background}
We model the reinforcement learning problem as an infinite-horizon discounted Markov Decision Process (MDP), which comprises a state space $\mathcal{S}$, a discrete action space $\mathcal{A}$, a state transition probability distribution $\mathcal{P}:\mathcal{S} \times \mathcal{A} \times \mathcal{S} \rightarrow \mathbb{R}$, an expected reward function $R:\mathcal{S}\times\mathcal{A}\rightarrow \mathbb{R}$ and a discount factor $\gamma \in [0,1]$.
At each step $t$, with a given state $s_t \in \mathcal{S}$, the agent receives a reward $r_t=R\left(s_t, a_t\right)$ and the new state $s_{t+1} \in \mathcal{S}$ after taking an action $a_t \in \mathcal{A}$.
The goal of the agent is to find a policy $\pi: \mathcal{S}\times\mathcal{A}\rightarrow [0, 1]$ that maximizes the expected return {${\mathbb{E}_\pi\left[\sum_{t=0}^{\infty}\gamma^t r_t\right]}$}.

In the MDP problem, the action value function ${Q^\pi(s,a)=\mathbb{E}_\pi\left[\sum_{t=0}^{\infty}\gamma^t r_t|s_0=s, a_0=a\right]}$ denotes the expected return after doing the action $a$ in the state $s$ with the policy $\pi$. 
The optimal policy can be obtained as: $\pi^*(s) = \arg \max_{a\in \mathcal{A}}Q^*(s,a)$ where the optimal action value function $Q^*(s,a)$ satisfies the {Bellman optimality equation}:
\begin{normalsize}
	\begin{equation} \label{policy}
	\begin{split}
	Q^*\left(s,a\right)=R\left(s,a\right) + \gamma \sum_{s'\in\mathcal{S}}\mathcal{P}_{sa}^{s'}\max_{a'\in\mathcal{A}}Q^*\left(s',a'\right).
	\end{split}
	\end{equation}
\end{normalsize}
\textbf{(Double) Q-learning.} To approximate $Q^*\left(s,a\right)$,  Q-learning constructs a Q-function and updates it in each step via $Q\left(s_t,a_t\right)\leftarrow Q\left(s_t,a_t\right)+ \alpha\left(y_t^{Q} - Q\left(s_t,a_t\right)\right)$, where the target value $y_t^{Q}$ is defined as below:
\begin{normalsize}
	\begin{equation} \label{policy} 
	\begin{split}
	y_t^{Q} = r_{t}+\gamma \max_{a' \in \mathcal{A}}Q\left(s_{t+1},a'\right).
	\end{split}
	\end{equation}
\end{normalsize}
Instead, Double Q-learning maintains two Q-functions, $Q^A$ and $Q^B$, and randomly updates one Q-function, such as $Q^A$, with the target value $y_t^{DQ}$ as below:
\begin{normalsize}
	\begin{equation} \label{policy}
	\begin{split}
	y_t^{DQ} = r_t + \gamma Q^B\left(s_{t+1}, \arg \max_{a'\in \mathcal{A}}  Q^A(s_{t+1},a')\right).
	\end{split}
	\end{equation}
\end{normalsize}
\textbf{Clipped Double Q-learning.}
It uses the maximum action value of one Q-function to clip the target value in Double Q-learning as below to update the Q-function:
 \begin{normalsize}
 	\begin{equation} \label{policy}
 	\begin{split}
 	y_t^{CDQ} = r_t + \gamma  \min\left\{Q^A(s_{t+1},a^*), Q^B(s_{t+1},a^*\right\},
 	\end{split}
 	\end{equation}
 \end{normalsize}
where $a^*=\arg\max_aQ^A\left(s_{t+1},a\right)$.

\textbf{Twin Delayed Deep Deterministic policy gradient (TD3).} 
TD3 applies the clipped Double Q-learning into the continuous action control with the actor-critic framework.
Specifically, it maintains a actor network $\mu\left(s; {\boldsymbol{\phi}}\right)$ and two critic networks $Q\left(s, a;\boldsymbol{\theta}_1\right)$ and $Q\left(s, a;\boldsymbol{\theta}_2\right)$. Two critic networks are updated via $\boldsymbol{\theta}_i \leftarrow \boldsymbol{\theta}_i + \alpha \nabla_{\boldsymbol{\theta}_i}\mathbb{E}\left[\left(Q\left(s_t, a_t;\boldsymbol{\theta}_i\right)-y_t^{TD3}\right)^2\right]$. The target value $y_t^{TD3}$ is defined as below:
\begin{normalsize}
\begin{equation} \label{policy}
y_t^{TD3} = r_t + \gamma \min_{i=1,2}Q\left(s_{t+1}, \mu\left(s_{t+1}; {\boldsymbol{\phi}^-}\right);\boldsymbol{\theta}_i^-\right),\end{equation}
\end{normalsize}
where $\boldsymbol{\phi}^-$ and $\boldsymbol{\theta}_i^-$ are the soft updated parameters of $\boldsymbol{\phi}$ and $\boldsymbol{\theta}_i$. The actor $\mu\left(s; {\boldsymbol{\phi}}\right)$ is updated via $\boldsymbol{\phi} \leftarrow \boldsymbol{\phi} + \alpha \nabla_{\boldsymbol{\phi}}J$, where the policy graident $\nabla_{\boldsymbol{\phi}}J$ is:
\begin{normalsize}
	\begin{equation} \label{policy}
	\begin{split}
	\nabla_{\boldsymbol{\phi}}J=\mathbb{E}\left[\left.\nabla_{a} Q\left(s_{t}, a ; \boldsymbol{\theta}_1\right)\right|_{a=\mu\left(s_{t} ; \boldsymbol{\phi}\right)} \nabla_{\boldsymbol{\phi}} \mu\left(s_{t} ; \boldsymbol{\phi}\right)\right].
	\end{split}
	\end{equation}
\end{normalsize}

 \begin{algorithm*}[ht]
	\caption{Action Candidate Based Clipped Double Q-learning}
	\label{alg:algorithm1}
	\textbf{Initialize} Q-functions $Q^A$ and $Q^B$, initial state $s$ and the number $K$ of action candiadte.
	\begin{algorithmic}[1] 
		\REPEAT
		\STATE Select action $a$ based on $Q^A\left(s,\cdot\right)$, $Q^B\left(s,\cdot\right)$ (e.g., $\epsilon$-greedy in $Q^A\left(s,\cdot\right)+Q^B\left(s,\cdot\right)$) and observe reward $r$, next state $s'$.
		\IF{update $Q^A$}
		\STATE Determine action candidates $\boldsymbol{\mathcal{M}}_K$ from $Q^B\left(s', \cdot\right)$ and define $a_K^* = \arg \max_{a \in \boldsymbol{\mathcal{M}}_K} Q^A\left(s', a\right)$.
		\STATE $Q^A\left(s,a\right) \leftarrow Q^A\left(s,a\right) + \alpha\left(s,a\right) \cdot \left( r+\gamma \min\left\{Q^B\left(s', a_K^*\right), {\max}_{a} Q^A\left(s', a\right)\right\} -Q^A\left(s,a\right)  \right)$.
		\ELSIF{update $Q^B$}
		\STATE Determine action candidates $\boldsymbol{\mathcal{M}}_K$ from $Q^A\left(s', \cdot\right)$ and define $a_K^* = \arg \max_{a \in \boldsymbol{\mathcal{M}}_K} Q^B\left(s', a\right)$.
		\STATE $Q^B\left(s,a\right) \leftarrow Q^B\left(s,a\right) + \alpha\left(s,a\right) \cdot \left( r+\gamma \min\left\{Q^A\left(s', a_K^*\right), {\max}_{a} Q^B\left(s', a\right)\right\} -Q^B\left(s,a\right)  \right)$.
		\ENDIF
		\STATE $s \leftarrow s'$
		\UNTIL{end}
	\end{algorithmic}
\end{algorithm*}

\section{Estimating the Maximum Expected Value}
\subsection{Revisiting the Clipped Double Estimator}
Suppose that there is a finite set of $N$ $\left(N\geq2\right)$ independent random variables $\boldsymbol{X}=\left\{X_1,\ldots,X_N\right\}$ with the expected values $\boldsymbol{\mu}=\left\{\mu_1,\mu_2,\ldots,\mu_N\right\}$. 
We consider the problem of approximating the maximum expected value of the variables in such a set: $\mu^*={\max}_{i}\mu_i={\max}_{i}\mathbb{E}\left[X_i\right]$.
The clipped double estimator \cite{fujimoto2018addressing} denoted as $\hat{\mu}_{CDE}^*$ is an effective estimator to estimate the maximum expected value.

Specifically, let $S=\bigcup_{i=1}^{N}S_i$ denote a set of samples, where $S_i$ is the subset containing samples for the variable $X_i$. 
We assume that the samples in $S_i$ are independent and identically distributed (i.i.d). 
Then, we can obtain a set of the unbiased estimators $\hat{\boldsymbol{\mu}}=\left\{\hat{\mu}_1,\hat{\mu}_2,\ldots,\hat{\mu}_N\right\}$ where each element $\hat{\mu}_i$ is a unbiased estimator of $\mathbb{E}\left[X_i\right]$ and can be obtained by calculating the sample average: $\mathbb{E}\left[X_i\right]\approx\hat{\mu}_i \stackrel{\text { def }}{=} \frac{1}{|S_i|}\sum_{s \in S_i}s$. 
Further, we randomly divide the set of samples $S$ into two subsets: $S^A$ and $S^B$. Analogously, two sets of unbiased estimators $\hat{\boldsymbol{\mu}}^A=\left\{\hat{\mu}_1^A,\hat{\mu}_2^A,\ldots,\hat{\mu}_N^A\right\}$ and $\hat{\boldsymbol{\mu}}^B=\left\{\hat{\mu}_1^B,\hat{\mu}_2^B,\ldots,\hat{\mu}_N^B\right\}$ can be obtained by sample average:
 $\hat{\mu}_i^A=\frac{1}{|S_i^A|}\sum_{s \in S_i^A}s$, $\hat{\mu}_i^B=\frac{1}{|S_i^B|}\sum_{s \in S_i^B}s$.
 Finally, the clipped double estimator combines $\hat{\boldsymbol{\mu}}$, $\hat{\boldsymbol{\mu}}^A$ and $\hat{\boldsymbol{\mu}}^B$ to construct the following estimator to approximate the maximum expected value:
 \begin{normalsize}
 	\begin{equation} \label{cde}
 	\begin{split}
 	\mu^*={\max}_{i}\mu_i\approx \min\left\{\hat{\mu}_{a^*}^B, \max_i\hat{\mu}_i\right\},
 	\end{split}
 	\end{equation}
 \end{normalsize}
where the variable $\max_i\hat{\mu}_i$ is called the single estimator denoted as $\hat{\mu}_{SE}^*$ and the variable $\hat{\mu}_{a^*}^B$ is called the double estimator denoted as $\hat{\mu}_{DE}^*$. 

For single estimator, it directly uses the maximum value in $\hat{\boldsymbol{\mu}}$ to approximate the maximum expected value.
Since the expected value of the single estimator is no less than $\mu^*$, the single estimator has overestimation bias.
Instead, for double estimator, it first calculates the index $a^*$ corresponding to the maximum value in $\hat{\boldsymbol{\mu}}^A$, that is $\hat{{\mu}}_{a^*}^A=\max_i \hat{{\mu}}_i^A$, and then uses the value $\hat{\mu}_{a^*}^B$ to estimate the maximum expected value. 
Due to the expected value of double estimator is no more than $\mu^*$, it is underestimated.

Although the double estimator is underestimated in terms of the expected value, it still can't entirely eliminate the overestimation \cite{fujimoto2018addressing}. By clipping the double estimator via single estimator, the clipped double estimator can effectively relieve it.
However, due to the expected value of $\min\left\{\hat{\mu}_{a^*}^B, \max_i\hat{\mu}_i\right\}$ is no more than that of $\hat{\mu}_{a^*}^B$, the clipped double estimator may further exacerbate the underestimation bias in the double estimator and thus suffer from larger underestimation bias. 

\begin{algorithm*}[ht]
	\caption{Action Candidate Based TD3}
	\label{alg:algorithm}
	Initialize critic networks $Q\left(\cdot;\boldsymbol{\theta}_1\right)$, $Q\left(\cdot;\boldsymbol{\theta}_2\right)$, and actor networks $\mu\left(\cdot; {\boldsymbol{\phi}_1}\right)$,  $\mu\left(\cdot; {\boldsymbol{\phi}_2}\right)$ with random parameters $\boldsymbol{\theta}_1$, $\boldsymbol{\theta}_2$, $\boldsymbol{\phi}_1$, $\boldsymbol{\phi}_2$  \\
	Initialize target networks $\boldsymbol{\theta}_1^- \leftarrow \boldsymbol{\theta}_1$, $\boldsymbol{\theta}_2^- \leftarrow \boldsymbol{\theta}_2$, $\boldsymbol{\phi}_1^- \leftarrow \boldsymbol{\phi}_1$, $\boldsymbol{\phi}_2^- \leftarrow \boldsymbol{\phi}_2$\\
	Initialize replay buffer $\mathcal{D}$
	\begin{algorithmic}[1] 
		\FOR{$t = 1:T$}
		\STATE Select action with exploration noise $a \sim \mu\left(s; {\boldsymbol{\phi}_1}\right)+\epsilon$, $\epsilon \sim \mathcal{N}(0, \sigma)$  and observe reward  $r$ and next state $s'$.
		\STATE Store transition tuple $\left\langle s, a, r, s'\right\rangle$ in $\mathcal{D}$.
		\STATE Sample a mini-batch of transitions $\left\{\left\langle s, a, r, s'\right\rangle\right\}$ from $\mathcal{D}$.
		\STATE Determine $\boldsymbol{\mathcal{M}}_K = \left\{ a_i \right\}_{i=1}^K, a_i \sim \mathcal{N}\left(\mu\left(s'; {\boldsymbol{\phi}_2^-}\right), \bar{\sigma}\right)$ and define $a_K^* = \arg \max_{a \in \boldsymbol{\mathcal{M}}_K} Q\left(s', a;\boldsymbol{\theta}_1^-\right)$.
		\STATE Update $\boldsymbol{\theta}_{i} \leftarrow \operatorname{argmin}_{\boldsymbol{\theta}_{i}} N^{-1} \sum{\left[r + \gamma \min\left\{Q\left(s', a_K^*;\boldsymbol{\theta}_2^-\right), Q\left(s', \mu\left(s'; {\boldsymbol{\phi}_1^-}\right);\boldsymbol{\theta}_1^-\right)\right\}-Q\left(s, a; \boldsymbol{\theta}_i\right)\right]}^{2}$.
		\IF{$t$ mod $d$}
		\STATE Update $\boldsymbol{\phi}_i$ by the deterministic policy gradient: $\nabla_{\boldsymbol{\phi}_i} J(\boldsymbol{\phi}_i)=\left.\frac{1}{N} \sum \nabla_{a} Q_{\theta_{i}}(s, a)\right|_{a=\mu\left(s; {\boldsymbol{\phi}_i}\right)} \nabla_{\boldsymbol{\phi}_i} \mu\left(s; {\boldsymbol{\phi}_i}\right)$.
		\STATE Update target networks: $\boldsymbol{\theta}_{i}^{-} \leftarrow \tau \boldsymbol{\theta}_{i}+(1-\tau) \boldsymbol{\theta}_{i}^{-}$, $\boldsymbol{\phi}_i^{-} \leftarrow \tau \boldsymbol{\phi}_i+(1-\tau) \boldsymbol{\phi}_i^{-}$.
		\ENDIF
		\ENDFOR
	\end{algorithmic}
\end{algorithm*}

\subsection{Action Candidate Based Clipped Double Estimator}
Double estimator is essentially an underestimated estimator, leading to the underestimation bias. The clipping operation in the clipped double estimator further exacerbates the underestimation problem. Therefore, although the clipped double estimator can effectively avoid the positive bias, it generates the large negative bias.

In order to reduce the negative bias of the clipped double estimator, we propose an action candidate based clipped double estimator denoted as $\hat{\mu}_{AC}^*$.
Notably, the double estimator chooses the index $a^*$  only from the estimator set $\hat{\boldsymbol{\mu}}^A$ and ignores the other estimator set $\hat{\boldsymbol{\mu}}^B$. Thus, it may choose the index $a^*$ associated with the low value in $\hat{\boldsymbol{\mu}}^B$ and generate the small estimation $\hat{\mu}_{a^*}^B$, leading to the large negative bias.
Different from the double estimator, instead of selecting the index $a^*$ from $\hat{\boldsymbol{\mu}}^A$ among all indexes, we just choose it from an index subset called candidates. 
The set of candidates, denoted as $\boldsymbol{\mathcal{M}}_K$, is defined as the index subset corresponding to the largest $K$ values in $\hat{\boldsymbol{\mu}}^B$, that is:
\begin{normalsize}
	\begin{equation} \label{policy}
	\begin{split}
	\boldsymbol{\mathcal{M}}_K = \left\{i|\hat{\mu}_i^B \in {\rm\ top} \ K \ {\rm values \ in \ \hat{\boldsymbol{\mu}}^B} \right\}.
	\end{split}
	\end{equation}
\end{normalsize}
The variable $a_K^*$ is then selected as the index to maximize $\hat{\boldsymbol{\mu}}^A$ among the index subset $\boldsymbol{\mathcal{M}}_K$: $\hat{\mu}_{a_K^*}^A=\max_{i \in \boldsymbol{\mathcal{M}}_K}\hat{\mu}_i^A$. 
If there are multiple indexes owning the maximum value, we randomly pick one. 
Finally, by clipping, we estimate the maximum expected value as below:
\begin{normalsize}
	\begin{equation} \label{policy}
	\begin{split}
	\mu^*={\rm max}_{i}\mu_i={\rm max}_{i}\mathbb{E}\left[\hat{\mu}_i^B\right] \approx \min\left\{ \hat{\mu}_{a_K^*}^B, \hat{\mu}_{SE}^*\right\}.
	\end{split}
	\end{equation}
\end{normalsize}

Consequently, we theoretically analyze the estimation bias of action candidate based clipped double estimator.
\begin{theorem}
	As the number $K$ decreases, the underestimation decays monotonically, that is $\mathbb{E}\left[\min\left\{\hat{\mu}_{a_{K}^*}^B, \hat{\mu}^*_{SE}\right\}\right] \geq \mathbb{E}\left[\min\left\{\hat{\mu}_{a_{K+1}^*}^B, \hat{\mu}^*_{SE}\right\}\right]$, $1\leq K < N$, where the inequality is strict if and only if $P\left( \hat{\mu}_{SE}^* > \hat{\mu}_{a_{K}^*}^B> \hat{\mu}_{a_{K+1}^*}^B\right)>0$ or $P\left(\hat{\mu}_{a_{K}^*}^B \geq \hat{\mu}_{SE}^*>\hat{\mu}_{a_{K+1}^*}^B\right)>0$. Moreover, $\forall{K}: 1\leq K \leq N$, $\mathbb{E}\left[\min\left\{\hat{\mu}_{a_K^*}^B, \hat{\mu}_{SE}^*\right\}\right] \geq \mathbb{E}\left[\hat{\mu}_{CDE}^*\right]$.
\end{theorem}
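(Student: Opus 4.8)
The plan is to derive both parts of the statement from a single per-realization (pathwise) inequality between the two double-estimator values, namely $\hat{\mu}_{a_K^*}^B \ge \hat{\mu}_{a_{K+1}^*}^B$, which I would establish for every outcome of the samples. Granting this, the map $x \mapsto \min\{x,\hat{\mu}_{SE}^*\}$ is nondecreasing, so $\min\{\hat{\mu}_{a_K^*}^B,\hat{\mu}_{SE}^*\} \ge \min\{\hat{\mu}_{a_{K+1}^*}^B,\hat{\mu}_{SE}^*\}$ holds pointwise, and taking expectations immediately gives the weak monotonicity claim.

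To prove the pathwise inequality I would exploit that the candidate sets are nested. By definition $\boldsymbol{\mathcal{M}}_K$ collects the indices of the $K$ largest entries of $\hat{\boldsymbol{\mu}}^B$, so (fixing a tie-break, which matters only on a probability-zero event for continuous laws) $\boldsymbol{\mathcal{M}}_{K+1} = \boldsymbol{\mathcal{M}}_K \cup \{j\}$, where $j$ is the index attaining the $(K+1)$-th largest value of $\hat{\boldsymbol{\mu}}^B$; in particular $\hat{\mu}_j^B \le \hat{\mu}_i^B$ for every $i \in \boldsymbol{\mathcal{M}}_K$. Now $a_{K+1}^* = \arg\max_{i\in \boldsymbol{\mathcal{M}}_{K+1}}\hat{\mu}_i^A$ either lies in $\boldsymbol{\mathcal{M}}_K$, in which case it coincides with $a_K^*$ and the two $B$-values are equal, or it equals the new index $j$, in which case $\hat{\mu}_{a_{K+1}^*}^B = \hat{\mu}_j^B \le \hat{\mu}_{a_K^*}^B$ because $a_K^* \in \boldsymbol{\mathcal{M}}_K$. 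Either way $\hat{\mu}_{a_K^*}^B \ge \hat{\mu}_{a_{K+1}^*}^B$, as needed.

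For the strict-inequality characterization I would invoke the standard fact that if $X \ge Y$ almost surely then $\mathbb{E}[X]=\mathbb{E}[Y]$ forces $X=Y$ a.s.; hence strictness of the expectation inequality is equivalent to $P(X>Y)>0$ for $X = \min\{\hat{\mu}_{a_K^*}^B,\hat{\mu}_{SE}^*\}$ and $Y=\min\{\hat{\mu}_{a_{K+1}^*}^B,\hat{\mu}_{SE}^*\}$. Writing $b_K = \hat{\mu}_{a_K^*}^B \ge b_{K+1} = \hat{\mu}_{a_{K+1}^*}^B$ and $s = \hat{\mu}_{SE}^*$, a three-way case split on where $s$ sits relative to $b_{K+1} \le b_K$ shows $\min\{b_K,s\} > \min\{b_{K+1},s\}$ precisely on the event $\{s > b_K > b_{K+1}\} \cup \{b_K \ge s > b_{K+1}\}$, which is exactly the union of the two events in the statement.

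Finally, the lower bound by $\hat{\mu}_{CDE}^*$ would follow as a corollary. At $K=N$ the candidate set is the full index set, so $a_N^* = \arg\max_i \hat{\mu}_i^A = a^*$ and $\min\{\hat{\mu}_{a_N^*}^B,\hat{\mu}_{SE}^*\} = \min\{\hat{\mu}_{a^*}^B,\hat{\mu}_{SE}^*\} = \hat{\mu}_{CDE}^*$; telescoping the monotonicity from $K$ up to $N$ then yields $\mathbb{E}[\min\{\hat{\mu}_{a_K^*}^B,\hat{\mu}_{SE}^*\}] \ge \mathbb{E}[\hat{\mu}_{CDE}^*]$ for all $K$. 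The one genuinely delicate step is the pathwise inequality: the whole argument hinges on the observation that increasing $K$ enlarges the candidate pool only by an index that is $\hat{\boldsymbol{\mu}}^B$-dominated by the incumbents, so any change in the $\hat{\boldsymbol{\mu}}^A$-maximizer can only lower the reported $B$-value. Making the tie-breaking consistent so that the candidate sets genuinely nest is the main thing to get right.
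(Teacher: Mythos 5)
Your proposal is correct and follows essentially the same route as the paper: the paper's Property~1 establishes exactly your pathwise inequality $\hat{\mu}_{a_K^*}^B \ge \hat{\mu}_{a_{K+1}^*}^B$ via the nested candidate sets (with the same tie-breaking caveat you flag, which the paper's appendix handles by assuming the values in $\hat{\boldsymbol{\mu}}^A$ and $\hat{\boldsymbol{\mu}}^B$ are distinct), and the endpoint identification $\hat{\mu}_{a_N^*}^B=\hat{\mu}_{DE}^*$ gives the bound against $\hat{\mu}_{CDE}^*$ just as you telescope it. Your only departure is presentational---where the paper derives the strictness characterization by decomposing $\mathbb{E}\left[G(K)\right]$ into conditional expectations on $\left\{\hat{\mu}_{a_K^*}^B < \hat{\mu}_{SE}^*\right\}$ and its complement, you read off the same two events directly from the pointwise case analysis of $\min\left\{b_K,s\right\}>\min\left\{b_{K+1},s\right\}$, which is a slightly cleaner bookkeeping of the identical argument.
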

Notably, from the last inequality in Theorem 1, one can see that our estimator can effectively reduce the large underestimation bias in clipped double estimator. Moreover, since the existed inequality $\mathbb{E}\left[\hat{\mu}_{SE}^*\right]\geq\mathbb{E}\left[\min\left\{ \hat{\mu}_{a_K^*}^B,\hat{\mu}_{SE}^*\right\}\right]\geq\mathbb{E}\left[\hat{\mu}_{CDE}^*\right]$,
it essentially implies that the choice of $K$ controls the trade-off between the overestimation bias in single estimator and the underestimation bias in clipped double estimator. For the proof please refer to Appendix A.

The upper bound of 
$\mathbb{E}\left[\hat{\mu}_{SE}^{*}\right]$ \cite{van2013estimating} is:
\begin{small}
	\begin{equation} \label{policy}
	\begin{split}
	\mathbb{E}\left[\hat{\mu}_{SE}^{*}\right] = \mathbb{E}\left[ {\rm max}_i\hat{\mu}_i \right]  \leq \mu^* + \sqrt{\frac{N-1}{N} \sum_{i}^{N} \operatorname{Var}\left[\hat{\mu}_{i}\right]}.
	\end{split}
	\end{equation}
\end{small}
Since $\mathbb{E}\left[\hat{\mu}_{a_{K}^*}^B\right]$ decreases monotonically as the number $K$ increases (see Property 1 in Appendix A), $\mathbb{E}\left[\hat{\mu}_{a_{1}^*}^B\right]$ is maximum.
Due to the candidate subset $\mathcal{M}_1$ only contains one candidate corresponding to the largest value in $\hat{\boldsymbol{\mu}}^B$, we can obtain $\mathbb{E}\left[\hat{\mu}_{a_{1}^*}^B\right] = \mathbb{E}\left[\max_i\hat{\mu}_i^B\right]$.
Similar to the upper bound in $\mathbb{E}\left[\hat{\mu}_{SE}^{*}\right]$, we can see that $\mathbb{E}\left[\max_i\hat{\mu}_i^B\right] \leq \mu^* + \sqrt{\frac{N-1}{N} \sum_{i}^{N} \operatorname{Var}\left[\hat{\mu}_{i}^B\right]}$.
Since $\hat{\mu}_{i}^B$ is just estimated via $S_i^B$ containing half of samples rather than $S_i$, $\operatorname{Var}\left[\hat{\mu}_{i}\right] \leq \operatorname{Var}\left[\hat{\mu}_{i}^B\right]$ and thus $\mu^* + \sqrt{\frac{N-1}{N} \sum_{i}^{N} \operatorname{Var}\left[\hat{\mu}_{i}\right]} \leq  \mu^* + \sqrt{\frac{N-1}{N} \sum_{i}^{N} \operatorname{Var}\left[\hat{\mu}_{i}^B\right]}$.
So, such larger upper bound may cause the maximum value $\mathbb{E}\left[\hat{\mu}_{a_{1}^*}^B\right]$ to exceed the $\mathbb{E}\left[\hat{\mu}_{SE}^{*}\right]$.
Meanwhile, based on the monotonicity in Property 1, it further implies that when number $K$ is too small, the upper of $\mathbb{E}\left[\hat{\mu}_{a_{K}^*}^B\right]$ tends to be larger than the one of $\mathbb{E}\left[\hat{\mu}_{SE}^{*}\right]$, which may cause larger overestimation bias.   
Therefore, the clipping operation guarantees that no matter how small the number of the selected candidates is, the overestimation bias of our estimator is no more than that of the single estimator.

\section{Action Candidate Based Clipped Double Estimator for Double Q-learning and TD3}
In this section, we apply our proposed action candidate based clipped double estimator into Double Q-learning and TD3. 
For the discrete action task, we first propose the action candidate based clipped Double Q-learning in the tabular setting, and then generalize it to the deep case with the deep neural network, that is action candidate based clipped Double DQN.
For the continuous action task, we combine our estimator with TD3 and form action candidate based TD3.

\subsection{Action Candidate Based Clipped Double Q-learning}
\subsubsection{Tabular Version}
In tabular setting, action candidate based Double Q-learning stores the Q-functions $Q^A$ and $Q^B$, and learns them from two separate subsets of the online collected experience. Each Q-function is updated with a value from the other Q-function for the next state. 
Specifically, in order to update $Q^A$, we first determine the action candidates:
\begin{small}
	\begin{equation} \label{policy}
	\begin{split}
	\boldsymbol{\boldsymbol{\mathcal{M}}}_K = \left\{i | Q^B(s', a_i) \in {\rm  top\ } K {\rm \ values \ in} \ Q^B(s', \cdot) \right\}.
	\end{split}
	\end{equation}
\end{small}
According to the action value function $Q^A$, the action $a_K^*$ is the maximal valued action in the state $s'$ among $\boldsymbol{\mathcal{M}}_K$.
Then, we update $Q^A$ via the target value as below:
\begin{small}
	\begin{equation} \label{eq1}
	\begin{split}
	y^{AC\_CDQ}=r + \gamma \min\left\{Q^B(s', a_K^*), {\rm max}_{a} Q^A(s', a)\right\}.
	\end{split}
	\end{equation}
\end{small}
During the training process, 
the explored action is calculated with $\epsilon$-greedy exploration strategy based on the action values $Q^A$ and $Q^B$,
More details are shown in Algorithm~\ref{alg:algorithm1}. Note that in the tabular version, the number of action candidates balances the overestimation in Q-learning and the underestimation in clipped Double Q-learning.
\subsubsection{Deep Version}
For the task with the high-dimensional sensory input, we further propose the deep version of action candidate based clipped Double Q-learning, named action candidate based clipped Double DQN.
In our framework, we maintain two deep Q-networks and an experience buffer. In each update, we independently sample a batch of experience samples to train each Q-network   with the  target value in Eq.~\ref{eq1}.
Similar to the tabular version, the number of action candidates can also  balance the overestimation in DQN and the underestimation in clipped Double DQN. 
In addition, we verify that the action candidate based clipped Double Q-learning can converge to the optimal policy in the finite MDP setting. The proof can be seen in Appendix B.
\subsection{Action Candidate Based TD3}
As shown in Algorithm 2, the algorithm framework for the continuous action task follows the design in TD3.
To approximate the optimal action values, we construct two Q-networks $Q\left(s, a;\boldsymbol{\theta}_1\right)$ and $Q\left(s, a;\boldsymbol{\theta}_2\right)$ and two target Q-networks $Q\left(s, a;\boldsymbol{\theta}_1^-\right)$ and $Q\left(s, a;\boldsymbol{\theta}_2^-\right)$.
In addition, two deterministic policy networks $\mu\left(s; {\boldsymbol{\phi}_1}\right)$ and $\mu\left(s; {\boldsymbol{\phi}_2}\right)$, and two target networks $\mu\left(s; {\boldsymbol{\phi}_1^-}\right)$ and $\mu\left(s; {\boldsymbol{\phi}_2^-}\right)$ are exploited to represent the optimal decisions corresponding to  $Q\left(s, a;\boldsymbol{\theta}_1\right)$, $Q\left(s, a;\boldsymbol{\theta}_2\right)$,  $Q\left(s, a;\boldsymbol{\theta}_1^-\right)$ and $Q\left(s, a;\boldsymbol{\theta}_2^-\right)$.

Due to the continuity of the actions, it is impossible to precisely determine the top $K$ action candidates $\boldsymbol{\mathcal{M}}_K$ like in the discrete action case. We first exploit our deterministic policy network $\mu\left(s'; {\boldsymbol{\phi}_2^-}\right)$ to approximate the global optimal action $a^* = {\rm arg} \max_a Q\left(s', a;\boldsymbol{\theta}_2^-\right)$. Based on the estimated global optimal action $a^*$, we randomly select $K$ actions $\boldsymbol{\mathcal{M}}_{K}$ in the $\delta$-neighborhood of $a^*$ as the action candidates. 
Specifically, we draw $K$ samples from a Gaussian distribution $\mathcal{N}\left(\mu\left(s'; {\boldsymbol{\phi}_2^-}\right), \bar{\sigma}\right)$:
\begin{small}
	\begin{equation} \label{policy}
	\begin{split}
	\boldsymbol{\mathcal{M}}_K = \left\{ a_i | a_i \sim \mathcal{N}\left(\mu\left(s'; {\boldsymbol{\phi}_2^-}\right), \bar{\sigma}\right), i = 1, 2, \ldots ,K  \right\},
	\end{split}
	\end{equation}
\end{small}
where the hyper-parameter $\bar{\sigma}$ is the standard deviation. Both Q-networks  are updated via the following target value:
\begin{small}
	\begin{equation} \label{policy}
	\begin{split}
	y^{AC\_TD3} = r + \gamma \min\left\{Q\left(s', a_K^*;\boldsymbol{\theta}_2^-\right), Q\left(s', \mu\left(s'; {\boldsymbol{\phi}_1^-}\right);\boldsymbol{\theta}_1^-\right)\right\},
	\end{split}
	\end{equation}
\end{small}
where $a_K^* = \arg\max_{a \in \boldsymbol{\mathcal{M}}_K} Q\left(s', a;\boldsymbol{\theta}_1^-\right)$. 
The parameters of two policy networks are updated along the direction that can improve their corresponding Q-networks.
For more details please refer to Algorithm 2.

\begin{figure}[t]
	\centering  
	\subfigure[Number of impressions ($\times10^4$)]{
		\label{Fig.sub.1}
		\includegraphics[width=\columnwidth]{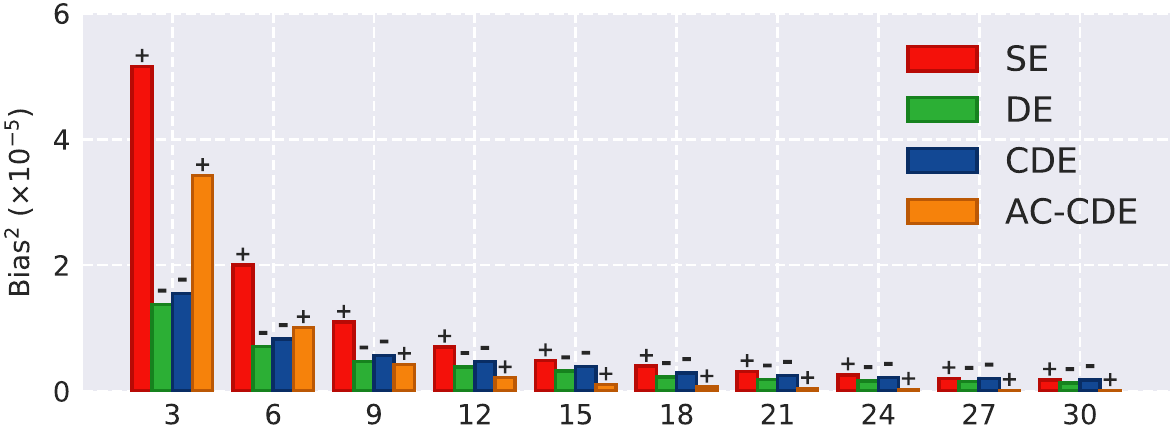}}
	\subfigure[Number of ads]{
		\label{Fig.sub.2}
		\includegraphics[width=\columnwidth]{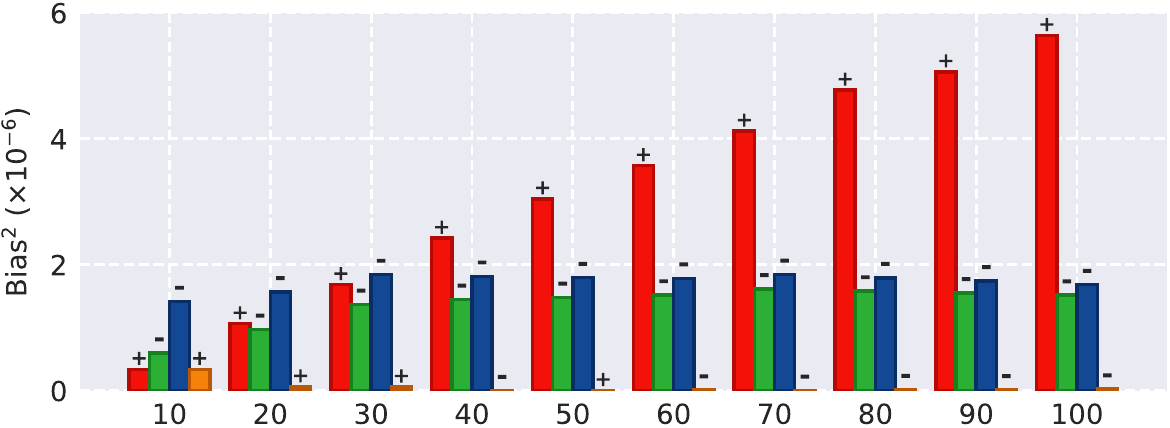}}
	\subfigure[Max probability in range ($\times 10^{-2}$)]{
		\label{Fig.sub.3}
		\includegraphics[width=\columnwidth]{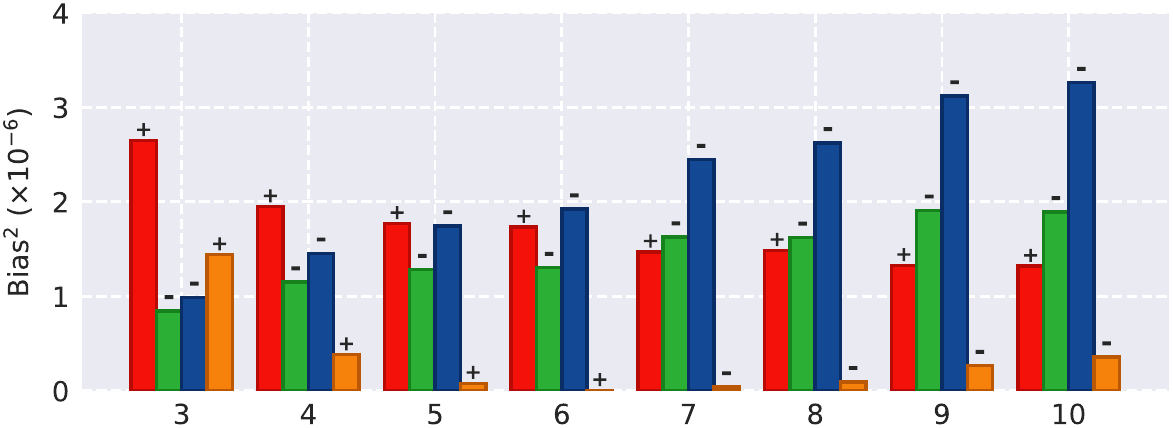}}
	\caption{Comparison on the multi-armed bandits for internet ads in three cases: (a) Varying the number of impressions; (b) Varying the number of ads; (c) Varying the max probability. 
		The symbol on the bar represents the sign of the bias.
		The results are averaged over $2,000$ experiments. 
		We use $15\%$ of actions as the action candidates.		
	}
	\label{fig:games}
\end{figure}

\begin{figure*}[ht]
	\centering
	\includegraphics[width=\textwidth]{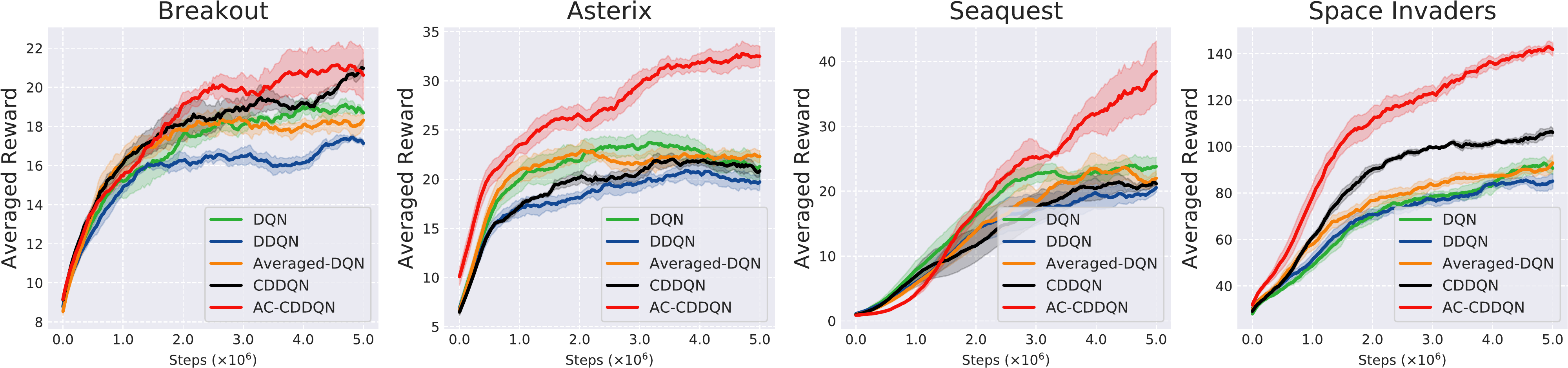}
	\caption{Learning curves on the four MinAtar benchmark environments. The results are averaged over five independent learning trials and the shaded area represents half a standard deviation. 
	}	
	\label{fig:minatari}
\end{figure*}

\section{Experiments}
\label{section:5}
In this section, we empirically evaluate our method on the discrete and continuous action tasks.

For the discrete action tasks, we conduct the following three experiments: 
\begin{itemize}
	\item For action candidate based clipped double estimator (AC-CDE), we compare them with single estimator \cite{hasselt2010double}, double estimator \cite{hasselt2010double} and clipped double estimator \cite{fujimoto2018addressing} on the multi-armed bandits problem.
	\item For action candidate based clipped Double Q-learning (AC-CDQ), we compare them with Q-learning \cite{watkins1992q}, Double Q-learning \cite{hasselt2010double} and clipped Double Q-learning \cite{fujimoto2018addressing} on grid world game.
	\item For action candidate based clipped Double DQN (AC-CDDQN), we compare them with DQN \cite{mnih2015human}, Double DQN \cite{van2016deep}, Averaged-DQN \cite{anschel2017averaged} and clipped Double DQN \cite{fujimoto2018addressing}  on several benchmark games in MinAtar \cite{young2019minatar}.
\end{itemize}

For the continuous action tasks, we compare our action candidate based TD3 (AC-TD3) with TD3 \cite{fujimoto2018addressing}, SAC \cite{haarnoja2018soft} and DDPG \cite{lillicrap2015continuous} on six MuJoCo \cite{todorov2012mujoco} based benchmark tasks implemented in OpenAI Gym \cite{dhariwal2017openai}.

\subsection{Discrete Action Tasks}
\subsubsection{Multi-Armed Bandits For Internet Ads}
In this experiment, we employ the framework of the multi-armed bandits to choose the best ad to show on the website among a set of $M$ possible ads, each one with an unknown fixed expected return per visitor.
For simplicity, we assume each ad has the same return per click, such that the best ad is the one with the maximum click rate. We model the click event per visitor in each ad $i$ as the Bernoulli event with mean $m_i$ and variance $(1-m_i)m_i$. In addition, all ads are assumed to have the same visitors, which means that given $N$ visitors, $N/M$ Bernoulli experiments will be executed to estimate the click rate of each ad. The default configuration in our experiment is $N=30,000$, $M=30$ and the mean click rates uniformly sampled from the interval $\left[0.02, 0.05\right]$. Based on this configuration, there are three settings: (1) We vary the number of visitors $N=\left\{30,000, 60,000,\ldots,270,000, 300,000\right\}$. (2) We vary the number of ads $M=\left\{10, 20,\ldots,90,100\right\}$. (3) We vary the upper limit of the sampling interval of mean click rate (the original is $0.05$) with values $\left\{0.03, 0.04,\ldots,0.09,0.1\right\}$.

To compare the absolute bias, we evaluate the single estimator, double estimator, clipped double estimator and AC-CDE with the square of bias ($bias^2$) in each setting.
As shown in Fig.~\ref{fig:games}, compared to other estimators, AC-CDE owns the lowest $bias^2$ in almost all experimental settings. 
It mainly benefits from the effective balance of our proposed estimator between the overestimation bias of single estimator and underestimation bias of clipped double estimator.
Moreover, AC-CDE has the lower $\textit{bias}^2$ than single estimator in all cases while in some cases it has the larger $bias^2$ than clipped double estimator such as the leftmost columns in Fig.~\ref{fig:games} (a) and (c). It's mainly due to that although AC-CDE can reduce the underestimation bias of clipped double estimator, too small number of action candidates may also in turn cause overestimation bias. 
Thus, the absolute value of such overestimation bias may be larger than the one of the underestimation bias in clipped double estimator. Despite this, AC-CDE can guarantee that the positive bias is no more than single estimator and the negative bias is also no more than clipped double estimator.

\subsubsection{Grid World} 
As a MDP task, in a $N\times N$ grid world, there are total $N^2$ states. The starting state $s_0$ is in the lower-left cell and the goal state is in the upper-right cell. Each state has four actions: east, west, south and north.  At any state, moving to an adjacent cell is deterministic, but a collision with the edge of the world will result in no movement. Taking an action at any state will receive a random reward which is set as below: if the next state is not the goal state, the random reward is $-6$ or $+4$ and if the agent arrives at the goal state, the random reward is $-30$ or $+40$. With the discount factor $\gamma$, the optimal value of the maximum value action in the starting state $s_0$ is $5\gamma^{2(N-1)}-\sum_{i=0}^{2N-3}\gamma^i$. 
We set $N$ to $5$ and $6$ to construct our grid world environments and compare the Q-learning, Double Q-learning, clipped Double Q-learning and AC-CDQ ($K=2,3$) on the mean reward per step and estimation error (see Fig.~\ref{fig:gridworld1}). 

From the top plots, one can see that AC-CDQ ($K=2,3$) can obtain the higher mean reward than other methods in both given environments.
We further plot the estimation error about the optimal state value $V^*(s_0)$ in bottom plots. 
Compared to Q-learning, Double Q-learning and clipped Double Q-learning, AC-CDQ ($K=2,3$) show the much lower estimation bias (more closer to the dash line), which means that it can better assess the action value and thus help generate more valid action decision.
Moreover, our AC-CDQ can significantly reduce the underestimation bias in clipped Double Q-learning.
Notably, as demonstrated in Theorem 1, 
the underestimation bias in the case of $K=2$ is smaller than that in the case of $K=3$.
And AC-CDQ can effectively balance the overestimation bias in Q-learning and the underestimation bias in clipped Double Q-learning.
%
%
%
%
\begin{figure}[h]
	\centering
	\includegraphics[width=1\columnwidth]{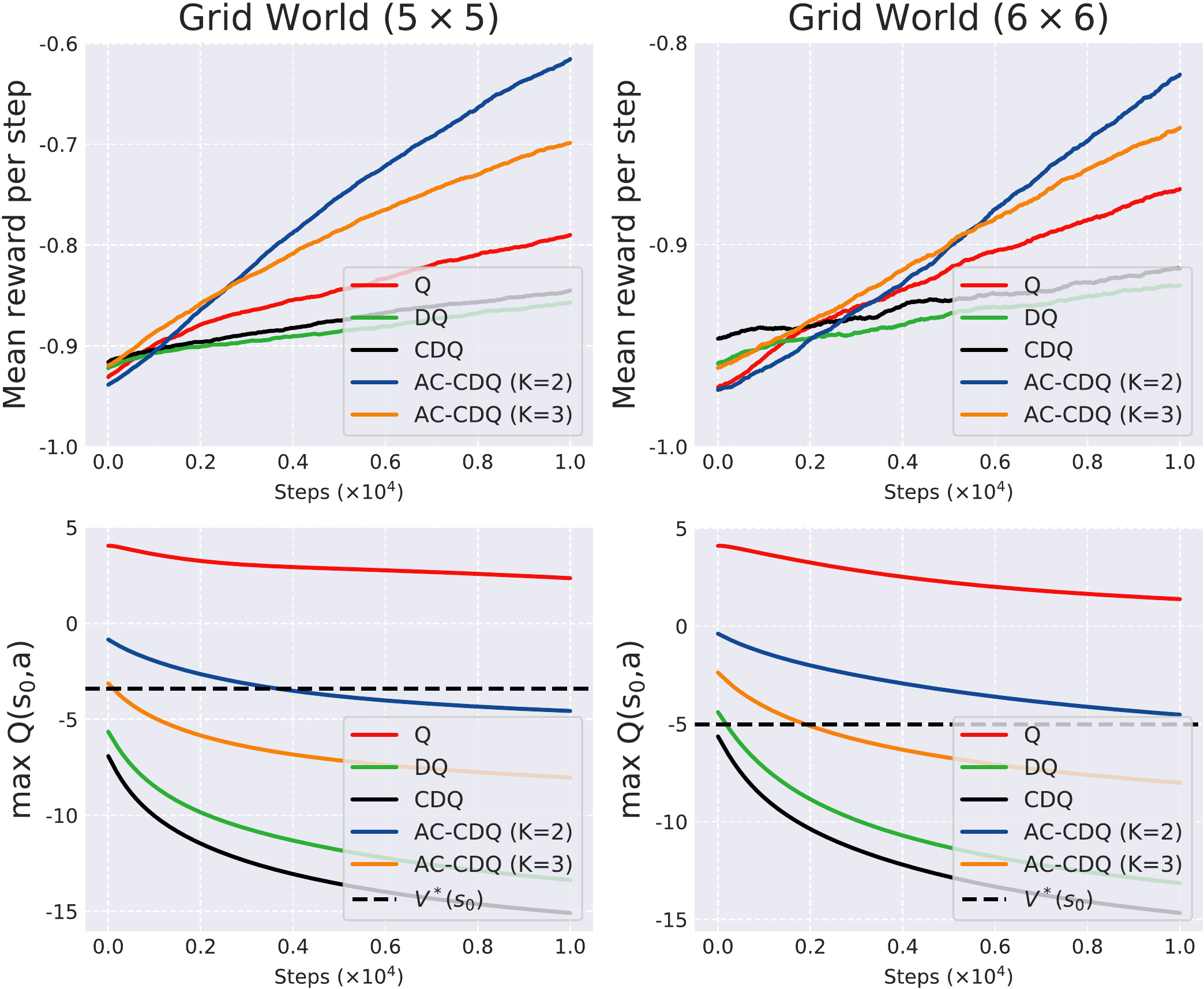}
	\caption{
		The top plots show the mean reward per step and the bottom plots show the estimated maximum action value from the state $s_0$ (the black dash line demotes the optimal state value $V^*(s_0)$). The results are averaged over 10000 experiments and each experiment contains 10000 steps. 
		We set the number of the action candidates to 2 and 3, respectively. 
	}	
	\label{fig:gridworld1}
\end{figure}
\subsubsection{MinAtar}
MinAtar is a game platform for testing the reinforcement learning algorithms, which uses a simplified state representation to model the game dynamics of Atari from ALE \cite{bellemare2013arcade}.
In this experiment, we compare the performance of DQN, Double DQN, Averaged-DQN, clipped Double DQN and AC-CDDQN on four released MinAtar games including Breakout, Asterix, Seaquest and Space Invaders.
We exploit the convolutional neural network as the function approximator and use the game image as the input to train the agent in an end-to-end manner. 
Following the settings in \cite{young2019minatar}, the hyper-parameters and settings of neural networks are set as follows: the batch size is 32; the replay memory size is 100,000; the update frequency is 1; the discounted factor is 0.99; the learning rate is 0.00025; the initial exploration is 1; the final exploration is 0.1; the replay start size is 5,000. The optimizer is RMSProp with the gradient momentum 0.95 and the minimum squared gradient 0.01. The experimental results are obtained after 5M frames.

Fig.~\ref{fig:minatari} represents the training curve about averaged reward of each algorithm. 
It shows that compared to DQN, Double DQN, Averaged-DQN and clipped Double DQN, AC-CDDQN can obtain better or comparable performance while they have the similar convergence speeds in all four games. 
Especially, for Asterix, Seaquest and Space Invaders, AC-CDDQN can achieve noticeably higher averaged rewards compared to the clipped Double DQN and obtain the gains of $36.3\%$, $74.4\%$ and $19.8\%$, respectively. 
Such significant gain mainly owes to that AC-CDDQN can effectively balance the overestimation bias in DQN and the underestimation bias in clipped Double DQN.
Moreover, in Fig.~\ref{fig:minatari_vary_k} we also test the averaged rewards of different numbers of action candidates $K=\left\{2,3,4\right\}$ for AC-CDDQN. The plots show that AC-CDDQN is consistent to obtain the robust and superior performance with different action candidate sizes.
\begin{figure}[t]
	\centering
	\includegraphics[width=1\columnwidth]{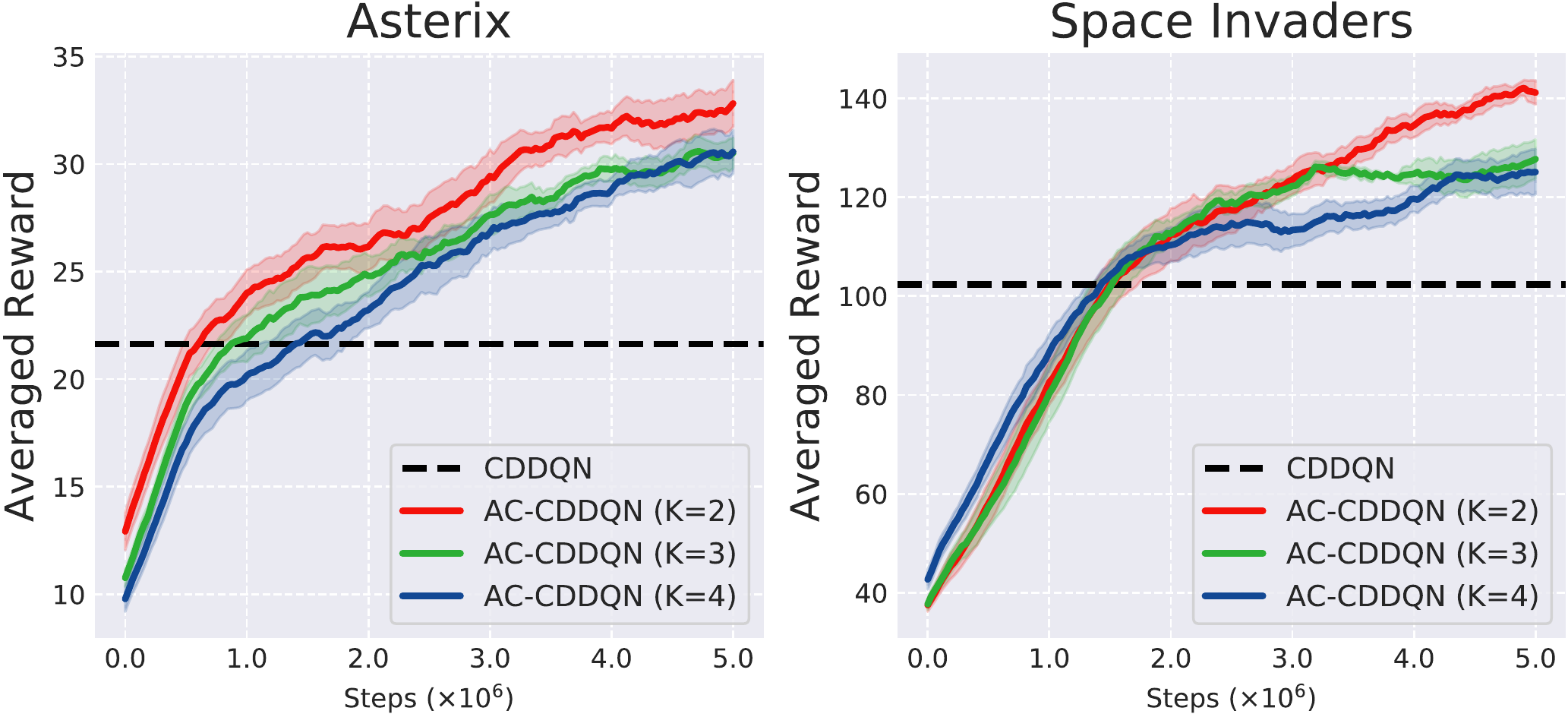}
	\caption{Learning curves on the several MinAtar benchmark environments for AC-CDDQN with different numbers of the action candidates. 
	}	
	\label{fig:minatari_vary_k}
\end{figure}

\subsection{Continuous Action Task}
\subsubsection{MuJoCo Tasks}
We verify our variant for continuous action, AC-TD3, on six MuJoCo continuous control tasks from OpenAI Gym including Ant-v2, Walker2D-v2, Swimmer-v2, Pusher-v2, Hopper-v2 and Reacher-v2.
We compare our method against the DDPG and two state of the art methods: TD3 and SAC.
In our method, we exploit the TD3 as our baseline and just modify it with our action candidate mechanism.	
The number of the action candidate is set to $32$.
We run all tasks with 1 million timesteps and the trained policies are evaluated every $5,000$ timesteps.

We list the training curves of Ant-v2 and Swimmer-v2 in the top row of Fig.~\ref{fig:mujoco} and more curves are listed in Appendix C.
The comprehensive comparison results are listed in Table~\ref{mujoco_table}.
From Table~\ref{mujoco_table}, one can see that DDPG performs poorly in most environments and TD3 and SAC can't handle some tasks such as Swimmer-v2 well.
In contrast, AC-TD3 consistently obtains the robust and competitive performance in all environments.
Particularly, AC-TD3 owns comparable learning speeds across all tasks and can achieve higher averaged reward than TD3 (our baseline) in most environments except for Hopper-v2. 
Such significant performance gain demonstrates that our proposed approximate action candidate method in the continuous action case is effective empirically.
Moreover, we also explain the performance advantage of our AC-TD3 over TD3 from the perspective of the bias (see the bottom row of the Fig.~\ref{fig:mujoco}).
The bottom plots show that in Ant-v2 and Swimmer-v2, AC-TD3 tends to have a lower estimation bias than TD3 about the expected return with regard to the initial state $s_0$, which potentially helps the agent assess the action at some state better and then generate the more reasonable policy.
\begin{figure}[t]
	\centering
	\includegraphics[width=1\columnwidth]{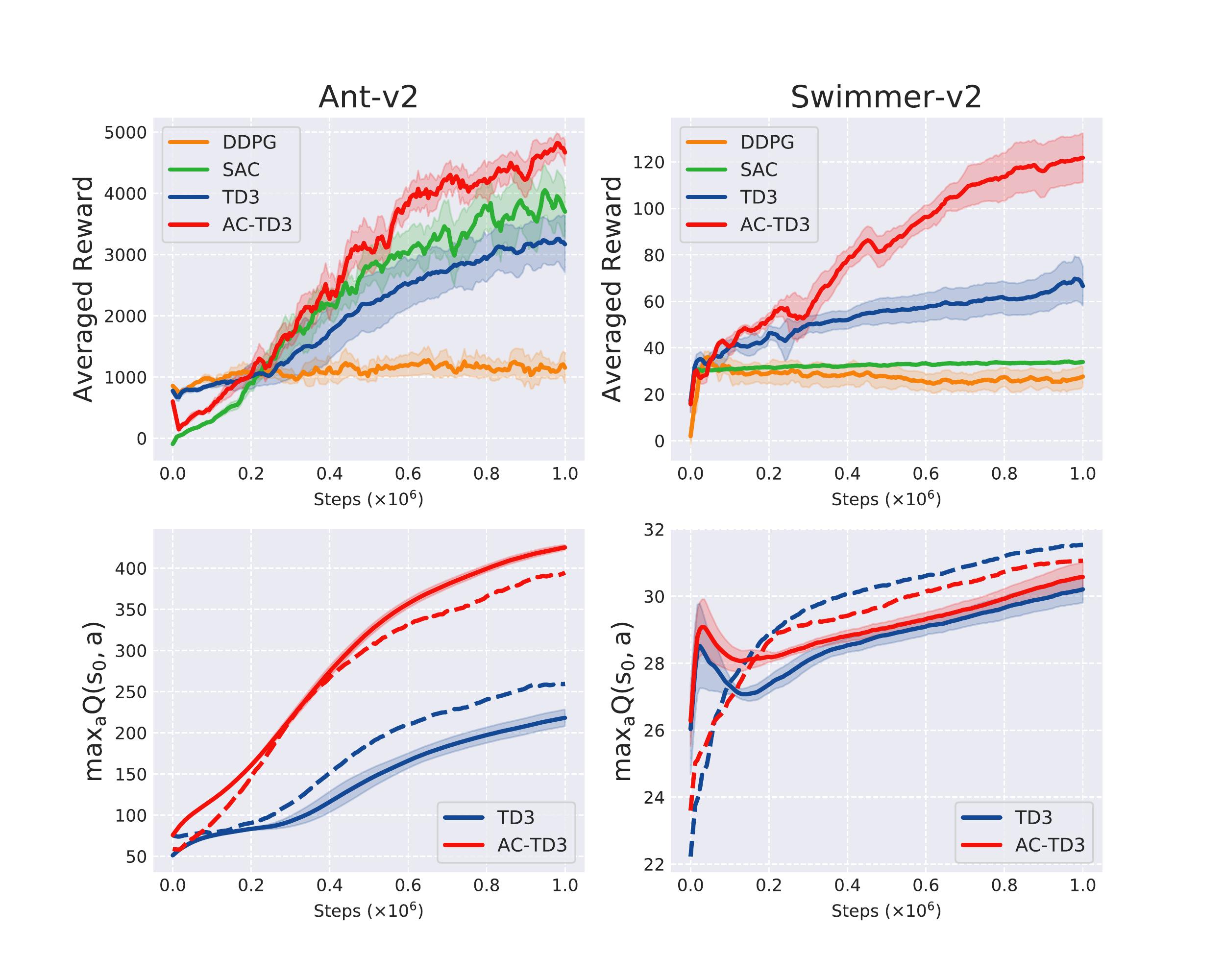}
	\caption{Top row: Learning curves for the OpenAI Gym continuous control tasks. The shaded region represents half a standard deviation of the average evaluation over 10 trials. Bottom plots: the estimation of the expected return with respect to the initial state $s_0$ of the game. The dash lines represent the real discounted return.
	}	
	\label{fig:mujoco}
\end{figure}

\begin{table}[htbp]
	\centering
	\resizebox{1\linewidth}{!}{
		\begin{tabular}{c|c|c|c|c}
			\toprule
			& AC-TD3  & TD3 & SAC & DDPG \\
			\midrule	
			Pusher   & \textbf{-22.7 $\pm$ 0.39}  & -31.8 & -76.7 & -38.4\\
			Reacher & \textbf{-3.5 $\pm$ 0.06}  & \textbf{-3.6} & -12.9 & -8.9 \\
			Walker2d &  \textbf{3800.3 $\pm$ 130.95}  & 3530.4 & 1863.8 & 1849.9 \\
			Hopper  & 2827.2 $\pm$ 83.2 & {2974.8} & \textbf{3111.1} & 2611.4 \\
			Swimmer &  \textbf{116.2 $\pm$ 3.63} & 63.2 & 33.6 & 30.2 \\
			Ant &  \textbf{4391 $\pm$ 205.6} & 3044.6 & 3646.5 & 1198.64 \\
			\bottomrule
	\end{tabular}}
	\caption{Averaged rewards over last 30\% episodes during training process.}
	\label{mujoco_table}
\end{table}

\section{Conclusion}
\label{section:6}
In this paper, we proposed an action candidate based clipped double estimator to approximate the maximum expected value. Furthermore, we applied this estimator to form the action candidate based clipped Double Q-learning. Theoretically, the underestimation bias in clipped Double Q-learning decays monotonically as the number of action candidates decreases. The number of the action candidates can also control the trade-off between the overestimation and underestimation. 
Finally, we also extend our clipped Double Q-learning to the deep version and the continuous action tasks.
Experimental results demonstrate that our proposed methods yield competitive performance. 

\section{Acknowledgments}
This work was supported by the National Science Fund of China (Grant Nos. U1713208, 61876084), Program for Changjiang Scholars.

\bibliographystyle{aaai21.sty}
\bibliography{aaai}

\begin{thebibliography}{24}
\providecommand{\natexlab}[1]{#1}
\providecommand{\url}[1]{\texttt{#1}}
\providecommand{\urlprefix}{URL }
\expandafter\ifx\csname urlstyle\endcsname\relax
  \providecommand{\doi}[1]{doi:\discretionary{}{}{}#1}\else
  \providecommand{\doi}{doi:\discretionary{}{}{}\begingroup
  \urlstyle{rm}\Url}\fi

\bibitem[{Anschel, Baram, and Shimkin(2017)}]{anschel2017averaged}
Anschel, O.; Baram, N.; and Shimkin, N. 2017.
\newblock Averaged-dqn: Variance reduction and stabilization for deep
  reinforcement learning.
\newblock In \emph{ICML}.

\bibitem[{Bellemare et~al.(2013)Bellemare, Naddaf, Veness, and
  Bowling}]{bellemare2013arcade}
Bellemare, M.~G.; Naddaf, Y.; Veness, J.; and Bowling, M. 2013.
\newblock The arcade learning environment: An evaluation platform for general
  agents.
\newblock \emph{JAIR} 47: 253--279.

\bibitem[{D'Eramo, Restelli, and Nuara(2016)}]{d2016estimating}
D'Eramo, C.; Restelli, M.; and Nuara, A. 2016.
\newblock Estimating maximum expected value through gaussian approximation.
\newblock In \emph{ICML}.

\bibitem[{Dhariwal et~al.(2017)Dhariwal, Hesse, Klimov, Nichol, Plappert,
  Radford, Schulman, Sidor, Wu, and Zhokhov}]{dhariwal2017openai}
Dhariwal, P.; Hesse, C.; Klimov, O.; Nichol, A.; Plappert, M.; Radford, A.;
  Schulman, J.; Sidor, S.; Wu, Y.; and Zhokhov, P. 2017.
\newblock Openai baselines.

\bibitem[{Ernst, Geurts, and Wehenkel(2005)}]{ernst2005tree}
Ernst, D.; Geurts, P.; and Wehenkel, L. 2005.
\newblock Tree-based batch mode reinforcement learning.
\newblock \emph{JMLR} 6(Apr): 503--556.

\bibitem[{Fujimoto, Van~Hoof, and Meger(2018)}]{fujimoto2018addressing}
Fujimoto, S.; Van~Hoof, H.; and Meger, D. 2018.
\newblock Addressing function approximation error in actor-critic methods.
\newblock \emph{arXiv preprint arXiv:1802.09477} .

\bibitem[{Haarnoja et~al.(2018)Haarnoja, Zhou, Hartikainen, Tucker, Ha, Tan,
  Kumar, Zhu, Gupta, Abbeel et~al.}]{haarnoja2018soft}
Haarnoja, T.; Zhou, A.; Hartikainen, K.; Tucker, G.; Ha, S.; Tan, J.; Kumar,
  V.; Zhu, H.; Gupta, A.; Abbeel, P.; et~al. 2018.
\newblock Soft actor-critic algorithms and applications.
\newblock \emph{arXiv preprint arXiv:1812.05905} .

\bibitem[{Hasselt(2010)}]{hasselt2010double}
Hasselt, H.~V. 2010.
\newblock Double Q-learning.
\newblock In \emph{NIPS}.

\bibitem[{Lan et~al.(2020)Lan, Pan, Fyshe, and White}]{lan2020maxmin}
Lan, Q.; Pan, Y.; Fyshe, A.; and White, M. 2020.
\newblock Q-learning.
\newblock \emph{arXiv preprint arXiv:2002.06487} .

\bibitem[{Lee, Defourny, and Powell(2013)}]{lee2013bias}
Lee, D.; Defourny, B.; and Powell, W.~B. 2013.
\newblock Bias-corrected Q-learning to control max-operator bias in Q-learning.
\newblock In \emph{ADPRL}.

\bibitem[{Lillicrap et~al.(2015)Lillicrap, Hunt, Pritzel, Heess, Erez, Tassa,
  Silver, and Wierstra}]{lillicrap2015continuous}
Lillicrap, T.~P.; Hunt, J.~J.; Pritzel, A.; Heess, N.; Erez, T.; Tassa, Y.;
  Silver, D.; and Wierstra, D. 2015.
\newblock Continuous control with deep reinforcement learning.
\newblock \emph{arXiv preprint arXiv:1509.02971} .

\bibitem[{Mnih et~al.(2015)Mnih, Kavukcuoglu, Silver, Rusu, Veness, Bellemare,
  Graves, Riedmiller, Fidjeland, Ostrovski et~al.}]{mnih2015human}
Mnih, V.; Kavukcuoglu, K.; Silver, D.; Rusu, A.~A.; Veness, J.; Bellemare,
  M.~G.; Graves, A.; Riedmiller, M.; Fidjeland, A.~K.; Ostrovski, G.; et~al.
  2015.
\newblock Human-level control through deep reinforcement learning.
\newblock \emph{Nature} 518(7540): 529.

\bibitem[{Song, Parr, and Carin(2019)}]{song2019revisiting}
Song, Z.; Parr, R.; and Carin, L. 2019.
\newblock Revisiting the Softmax Bellman Operator: New Benefits and New
  Perspective.
\newblock In \emph{ICML}.

\bibitem[{Strehl, Li, and Littman(2009)}]{strehl2009reinforcement}
Strehl, A.~L.; Li, L.; and Littman, M.~L. 2009.
\newblock Reinforcement learning in finite MDPs: PAC analysis.
\newblock \emph{JMLR} 10(Nov): 2413--2444.

\bibitem[{Strehl et~al.(2006)Strehl, Li, Wiewiora, Langford, and
  Littman}]{strehl2006pac}
Strehl, A.~L.; Li, L.; Wiewiora, E.; Langford, J.; and Littman, M.~L. 2006.
\newblock PAC model-free reinforcement learning.
\newblock In \emph{ICML}.

\bibitem[{Sutton and Barto(2018)}]{sutton2018reinforcement}
Sutton, R.~S.; and Barto, A.~G. 2018.
\newblock \emph{Reinforcement learning: An introduction}.
\newblock MIT press.

\bibitem[{Szita and L{\H{o}}rincz(2008)}]{szita2008many}
Szita, I.; and L{\H{o}}rincz, A. 2008.
\newblock The many faces of optimism: a unifying approach.
\newblock In \emph{ICML}.

\bibitem[{Thrun and Schwartz(1993)}]{thrun1993issues}
Thrun, S.; and Schwartz, A. 1993.
\newblock Issues in using function approximation for reinforcement learning.
\newblock In \emph{Proceedings of the 1993 Connectionist Models Summer School}.

\bibitem[{Todorov, Erez, and Tassa(2012)}]{todorov2012mujoco}
Todorov, E.; Erez, T.; and Tassa, Y. 2012.
\newblock Mujoco: A physics engine for model-based control.
\newblock In \emph{IROS}.

\bibitem[{van Hasselt(2013)}]{van2013estimating}
van Hasselt, H. 2013.
\newblock Estimating the maximum expected value: an analysis of (nested) cross
  validation and the maximum sample average.
\newblock \emph{arXiv preprint arXiv:1302.7175} .

\bibitem[{Van~Hasselt, Guez, and Silver(2016)}]{van2016deep}
Van~Hasselt, H.; Guez, A.; and Silver, D. 2016.
\newblock Deep reinforcement learning with double q-learning.
\newblock In \emph{Thirtieth AAAI conference on artificial intelligence}.

\bibitem[{Watkins and Dayan(1992)}]{watkins1992q}
Watkins, C.~J.; and Dayan, P. 1992.
\newblock Q-learning.
\newblock \emph{ML} 8(3-4): 279--292.

\bibitem[{Young and Tian(2019)}]{young2019minatar}
Young, K.; and Tian, T. 2019.
\newblock MinAtar: An Atari-inspired Testbed for More Efficient Reinforcement
  Learning Experiments.
\newblock \emph{arXiv preprint arXiv:1903.03176} .

\bibitem[{Zhang, Pan, and Kochenderfer(2017)}]{zhang2017weighted}
Zhang, Z.; Pan, Z.; and Kochenderfer, M.~J. 2017.
\newblock Weighted Double Q-learning.
\newblock In \emph{IJCAI}.

\end{thebibliography}

\clearpage


\onecolumn

\section{Supplementary Material for ``Action Candidate Based Clipped Double Q-learning for Discrete and Continuous Action Tasks"}
\section{A. Proof of Theorems on Proposed Estimator}
\begin{property}
	Let $a_K^*$ be the index that maximizes $\hat{\boldsymbol{\mu}}^A$ among $\boldsymbol{\mathcal{M}}_K$: $\hat{\mu}_{a_K^*}^A=\max_{i \in \boldsymbol{\mathcal{M}}_K}\hat{\mu}_i^A$.
	Then, as the number $K$ decreases, the underestimation decays monotonically, that is $\mathbb{E}\left[\hat{\mu}_{a_{K}^*}^B\right] \geq \mathbb{E}\left[\hat{\mu}_{a_{K+1}^*}^B\right]$, $1\leq K < N$. 
\end{property}

\begin{proof}
	Suppose that $\boldsymbol{\mathcal{M}}_K=\{a_{(1)},\ldots,a_{(K)}\}$ for $\hat{\mu}_{a_{K}^*}^B$, where $a_{(i)}$ denotes the index corresponding to the $i$-th largest value in $\hat{\mu}^B$ and $a_K^*=\arg\max_{j \in \boldsymbol{\mathcal{M}}_K}\hat{\mu}_j^A$. Then, $\boldsymbol{\mathcal{M}}_{K+1} = \boldsymbol{\mathcal{M}}_K \cup \{a_{(K+1)}\}$ for $\hat{\mu}_{a_{K+1}^*}^B$. 
	If $\hat{\mu}_{a_{(K+1)}}^A > \hat{\mu}_{a_K^*}^A$, then $a_{K+1}^*=a_{(K+1)}$. 
	Due to $a_{(K+1)} \notin \boldsymbol{\mathcal{M}}_{K}$ and $a_K^* \in \boldsymbol{\mathcal{M}}_{K}$, $\hat{\mu}_{a_{(K+1)}}^B=\hat{\mu}_{a_{K+1}^*}^B < \hat{\mu}_{a_{K}^*}^B$. Similarly, if $\hat{\mu}_{a_{(K+1)}}^A < \hat{\mu}_{a_K^*}^A$, then $a_{K+1}^*$ is equal to $a_{K}^*$. Thus, $\hat{\mu}_{a_{K+1}^*}^B = \hat{\mu}_{a_{K}^*}^B$.
	Finally, if $\hat{\mu}_{a_{(K+1)}}^A = \hat{\mu}_{a_K^*}^A$, $a^*_{K+1}$ is either equal to $a_K^*$ or equal to $a_{(K+1)}$. For the former,  the estimation value under $K+1$ remain unchanged, that is $\hat{\mu}_{a_{K+1}^*}^B = \hat{\mu}_{a_{K}^*}^B$. For the latter, $\hat{\mu}^B_{a^*_{K+1}} = \hat{\mu}^B_{a_{(K+1)}} <= \hat{\mu}^B_{a^*_K}$where the equal sign is established only when there are multiple $K$-th largest values and $\hat{\mu}^B_{a_{(K)}} = \hat{\mu}^B_{a^*_K}$.
	Therefore, we can obtain $\hat{\mu}_{a_{K+1}^*}^B \leq \hat{\mu}_{a_{K}^*}^B$ and $\mathbb{E}\left[\hat{\mu}_{a_{K+1}^*}^B\right] \leq \mathbb{E}\left[\hat{\mu}_{a_{K}^*}^B\right]$.
	The inequality is strict if and only if $P\left(\hat{\mu}_{a_{K+1}^*}^B<\hat{\mu}_{a_{K}^*}^B\right)>0$.
\end{proof}

\begin{theorem}
	Assume that the values in $\hat{\boldsymbol{\mu}}^A$ and $\hat{\boldsymbol{\mu}}^B$ are different.
	Then, as the number $K$ decreases, the underestimation decays monotonically, that is $\mathbb{E}\left[\min\left\{\hat{\mu}_{a_{K}^*}^B, \hat{\mu}^*_{SE}\right\}\right] \geq \mathbb{E}\left[\min\left\{\hat{\mu}_{a_{K+1}^*}^B, \hat{\mu}^*_{SE}\right\}\right]$, $1\leq K < N$, where the inequality is strict if and only if $P\left( \hat{\mu}_{SE}^* > \hat{\mu}_{a_{K}^*}^B> \hat{\mu}_{a_{K+1}^*}^B\right)>0$ or $P\left(\hat{\mu}_{a_{K}^*}^B \geq \hat{\mu}_{SE}^*>\hat{\mu}_{a_{K+1}^*}^B\right)>0$. Moreover, $\forall{K}: 1\leq K \leq N$, $\mathbb{E}\left[\min\left\{\hat{\mu}_{a_K^*}^B, \hat{\mu}_{SE}^*\right\}\right] \geq \mathbb{E}\left[\hat{\mu}_{CDE}^*\right]$.
\end{theorem}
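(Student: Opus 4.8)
The plan is to reduce both claims to the sample-wise comparison already established inside the proof of Property~1 and then exploit that $\min$ is non-decreasing in each argument. Property~1 actually proves more than an inequality of means: under the assumption that the values in $\hat{\boldsymbol{\mu}}^A$ and $\hat{\boldsymbol{\mu}}^B$ are distinct (so the $\arg\max$ selections are unambiguous), it gives the pointwise bound $\hat{\mu}_{a_{K+1}^*}^B \leq \hat{\mu}_{a_K^*}^B$ on every realization of the samples. Writing $b_K = \hat{\mu}_{a_K^*}^B$, $b_{K+1} = \hat{\mu}_{a_{K+1}^*}^B$ and $s = \hat{\mu}_{SE}^*$, the monotonicity of $x \mapsto \min\{x,s\}$ immediately yields $\min\{b_{K+1},s\} \leq \min\{b_K,s\}$ pointwise, and taking expectations gives the first inequality of the theorem.

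For the strictness assertion I would perform a case analysis on where $s$ falls relative to the ordered pair $b_{K+1} \leq b_K$, tracking when the pointwise gap $\min\{b_K,s\} - \min\{b_{K+1},s\}$ is strictly positive. There are three regimes: if $s \leq b_{K+1}$ both minima equal $s$ and the gap vanishes; if $b_{K+1} < s \leq b_K$ the gap equals $s - b_{K+1} > 0$; and if $s > b_K$ the gap equals $b_K - b_{K+1}$, which is positive precisely when $b_K > b_{K+1}$. The middle regime is exactly the event $\hat{\mu}_{a_K^*}^B \geq \hat{\mu}_{SE}^* > \hat{\mu}_{a_{K+1}^*}^B$, and the positive part of the last regime is exactly $\hat{\mu}_{SE}^* > \hat{\mu}_{a_K^*}^B > \hat{\mu}_{a_{K+1}^*}^B$. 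Since the gap is non-negative everywhere, its expectation is strictly positive if and only if at least one of these two events has positive probability, which is the stated criterion.

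For the final inequality, the key observation I would use is that the clipped double estimator is simply the $K=N$ instance of our estimator: when $K=N$ the candidate set $\boldsymbol{\mathcal{M}}_N$ contains all indices, so $a_N^* = \arg\max_i \hat{\mu}_i^A = a^*$ and therefore $\hat{\mu}_{CDE}^* = \min\{\hat{\mu}_{a_N^*}^B, \hat{\mu}_{SE}^*\}$. Applying the monotonicity inequality repeatedly along the chain from $K$ up to $N$ then gives $\mathbb{E}[\min\{\hat{\mu}_{a_K^*}^B, \hat{\mu}_{SE}^*\}] \geq \mathbb{E}[\min\{\hat{\mu}_{a_N^*}^B, \hat{\mu}_{SE}^*\}] = \mathbb{E}[\hat{\mu}_{CDE}^*]$ for every $1 \leq K \leq N$.

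The monotone inequality and the identification of the $K=N$ case are essentially immediate once Property~1 is available, so the one step demanding care will be the strictness characterization: I must confirm that the two listed events — with their deliberate mixture of strict and non-strict signs at the boundary $s = b_K$ — jointly cover exactly the set on which the pointwise gap is positive, so that the boundary realization is counted once and only once.
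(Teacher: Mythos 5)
Your proposal is correct and takes essentially the same route as the paper's own proof: both rest on the pointwise bound $\hat{\mu}_{a_{K+1}^*}^B \leq \hat{\mu}_{a_K^*}^B$ established in Property~1, characterize strictness by exactly the two events $\left\{\hat{\mu}_{SE}^* > \hat{\mu}_{a_K^*}^B > \hat{\mu}_{a_{K+1}^*}^B\right\}$ and $\left\{\hat{\mu}_{a_K^*}^B \geq \hat{\mu}_{SE}^* > \hat{\mu}_{a_{K+1}^*}^B\right\}$, and obtain the last inequality by identifying the $K=N$ case with the double estimator so that $\min\left\{\hat{\mu}_{a_N^*}^B, \hat{\mu}_{SE}^*\right\} = \hat{\mu}_{CDE}^*$. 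Your pointwise trichotomy on where $\hat{\mu}_{SE}^*$ falls relative to $\hat{\mu}_{a_{K+1}^*}^B \leq \hat{\mu}_{a_K^*}^B$ is a streamlined rendering of the paper's conditional-expectation decomposition of $G(K)$ and produces the same two positive terms, so the arguments coincide in substance.
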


\begin{proof}
	For simplicity, we set $G(K) = \min\left\{\hat{\mu}_{a_{K}^*}^B, \hat{\mu}^*_{SE}\right\}-\min\left\{\hat{\mu}_{a_{K+1}^*}^B, \hat{\mu}^*_{SE}\right\}$.
	First, we have
	\begin{normalsize}
		\begin{equation} \label{eq1}
		\begin{split}
		\mathbb{E}\left[G\left(K\right)\right]&=P\left(\hat{\mu}_{a_{K}^*}^B <  \hat{\mu}_{SE}^*\right)\mathbb{E}\left[G\left(K\right)\mid\hat{\mu}_{a_{K}^*}^B <  \hat{\mu}_{SE}^*\right]
		+P\left(\hat{\mu}_{a_{K}^*}^B \geq  \hat{\mu}_{SE}^*\right)\mathbb{E}\left[G\left(K\right)\mid\hat{\mu}_{a_{K}^*}^B \geq  \hat{\mu}_{SE}^*\right].
		\end{split}
		\end{equation}
	\end{normalsize}
	Then, from Property 1, we have $\hat{\mu}_{a_{K+1}^*}^B \leq \hat{\mu}_{a_{K}^*}^B$, $1\leq K < N$. Hence, the expected value of $G(K)$ under the condition $\hat{\mu}_{a_{K}^*}^B <  \hat{\mu}_{SE}^*$ can be obtained as below:
	\begin{normalsize}
		\begin{equation} \label{policy}
		\begin{split}
		&\mathbb{E}\left[G\left(K\right)\mid\hat{\mu}_{a_{K}^*}^B <  \hat{\mu}_{SE}^*\right]=\mathbb{E}\left[\hat{\mu}_{a_{K}^*}^B-\hat{\mu}_{a_{K+1}^*}^B\mid\hat{\mu}_{a_{K}^*}^B <  \hat{\mu}_{SE}^*\right]\\
		=&P\left(\hat{\mu}_{a_{K}^*}^B > \hat{\mu}_{a_{K+1}^*}^B\mid\hat{\mu}_{a_{K}^*}^B < \hat{\mu}_{SE}^*\right)\underbrace{\mathbb{E}\left[\hat{\mu}_{a_{K}^*}^B-\hat{\mu}_{a_{K+1}^*}^B \mid \hat{\mu}_{a_{K+1}^*}^B <\hat{\mu}_{a_{K}^*}^B < \hat{\mu}_{SE}^*\right]}_{>0} \\
		&+P\left(\hat{\mu}_{a_{K}^*}^B = \hat{\mu}_{a_{K+1}^*}^B\mid\hat{\mu}_{a_{K}^*}^B < \hat{\mu}_{SE}^*\right)\underbrace{\mathbb{E}\left[\hat{\mu}_{a_{K}^*}^B-\hat{\mu}_{a_{K+1}^*}^B \mid \hat{\mu}_{a_{K+1}^*}^B =\hat{\mu}_{a_{K}^*}^B < \hat{\mu}_{SE}^*\right]}_{=0}\\
		=&P\left(\hat{\mu}_{a_{K}^*}^B > \hat{\mu}_{a_{K+1}^*}^B\mid\hat{\mu}_{a_{K}^*}^B < \hat{\mu}_{SE}^*\right)\underbrace{\mathbb{E}\left[\hat{\mu}_{a_{K}^*}^B-\hat{\mu}_{a_{K+1}^*}^B \mid \hat{\mu}_{a_{K+1}^*}^B <\hat{\mu}_{a_{K}^*}^B < \hat{\mu}_{SE}^*\right]}_{>0}.
		\end{split}
		\end{equation}
	\end{normalsize}
	Thus, the first item in Eq.~\ref{eq1} can be rewritten as below:
	\begin{normalsize}
		\begin{equation} \label{policy}
		\begin{split}
		&P\left(\hat{\mu}_{a_{K}^*}^B <  \hat{\mu}_{SE}^*\right)\mathbb{E}\left[G\left(K\right)\mid\hat{\mu}_{a_{K}^*}^B <  \hat{\mu}_{SE}^*\right]\\
		=&P\left(\hat{\mu}_{a_{K}^*}^B <  \hat{\mu}_{SE}^*\right)P\left(\hat{\mu}_{a_{K}^*}^B > \hat{\mu}_{a_{K+1}^*}^B\mid\hat{\mu}_{a_{K}^*}^B < \hat{\mu}_{SE}^*\right){\mathbb{E}\left[\hat{\mu}_{a_{K}^*}^B-\hat{\mu}_{a_{K+1}^*}^B \mid \hat{\mu}_{a_{K+1}^*}^B <\hat{\mu}_{a_{K}^*}^B < \hat{\mu}_{SE}^*\right]}\\
		=& P\left(\hat{\mu}_{a_{K+1}^*}^B <\hat{\mu}_{a_{K}^*}^B < \hat{\mu}_{SE}^*\right){\mathbb{E}\left[\hat{\mu}_{a_{K}^*}^B-\hat{\mu}_{a_{K+1}^*}^B \mid \hat{\mu}_{a_{K+1}^*}^B <\hat{\mu}_{a_{K}^*}^B < \hat{\mu}_{SE}^*\right]}
		\end{split}
		\end{equation}
	\end{normalsize}
	
	Moreover, the expected value of $G(K)$ under the condition $\hat{\mu}_{a_{K}^*}^B \geq  \hat{\mu}_{SE}^*$ under the condition $\hat{\mu}_{a_{K}^*}^B \geq \hat{\mu}_{SE}^*$ can be obtained as below:
	\begin{normalsize}
		\begin{equation} \label{policy}
		\begin{split}
		&\mathbb{E}\left[G\left(K\right)\mid\hat{\mu}_{a_{K}^*}^B \geq  \hat{\mu}_{SE}^*\right]
		=\mathbb{E}\left[\hat{\mu}_{SE}^*-\min\left\{\hat{\mu}_{a_{K+1}^*}^B, \hat{\mu}^*_{SE}\right\}\mid\hat{\mu}_{a_{K}^*}^B \geq  \hat{\mu}_{SE}^*\right]\\
		=& P\left(\hat{\mu}_{a_{K+1}^*}^B<\hat{\mu}_{SE}^*\mid\hat{\mu}_{a_{K}^*}^B \geq  \hat{\mu}_{SE}^*\right)\underbrace{\mathbb{E}\left[\hat{\mu}_{SE}^*-\hat{\mu}_{a_{K+1}^*}^B\mid\hat{\mu}_{a_{K}^*}^B \geq  \hat{\mu}_{SE}^*>\hat{\mu}_{a_{K+1}^*}^B\right]}_{>0}\\
		&+P\left(\hat{\mu}_{a_{K+1}^*}^B\geq\hat{\mu}_{SE}^*\mid\hat{\mu}_{a_{K}^*}^B \geq  \hat{\mu}_{SE}^*\right)\underbrace{\mathbb{E}\left[\hat{\mu}_{SE}^*-\hat{\mu}_{SE}^*\mid\hat{\mu}_{a_{K}^*}^B \geq \hat{\mu}_{a_{K+1}^*}^B \geq  \hat{\mu}_{SE}^*\right]}_{=0}\\
		=& P\left(\hat{\mu}_{a_{K+1}^*}^B<\hat{\mu}_{SE}^*\mid\hat{\mu}_{a_{K}^*}^B \geq  \hat{\mu}_{SE}^*\right){\mathbb{E}\left[\hat{\mu}_{SE}^*-\hat{\mu}_{a_{K+1}^*}^B\mid\hat{\mu}_{a_{K}^*}^B \geq  \hat{\mu}_{SE}^*>\hat{\mu}_{a_{K+1}^*}^B\right]}.
		\end{split}
		\end{equation}
	\end{normalsize}
	Therefore, the second item in Eq.~\ref{eq1} can be rewritten as below:
	\begin{normalsize}
		\begin{equation} \label{policy}
		\begin{split}
		&P\left(\hat{\mu}_{a_{K}^*}^B \geq  \hat{\mu}_{SE}^*\right)\mathbb{E}\left[G\left(K\right)\mid\hat{\mu}_{a_{K}^*}^B \geq  \hat{\mu}_{SE}^*\right]\\
		=&P\left(\hat{\mu}_{a_{K}^*}^B \geq  \hat{\mu}_{SE}^*\right)P\left(\hat{\mu}_{a_{K+1}^*}^B<\hat{\mu}_{SE}^*\mid\hat{\mu}_{a_{K}^*}^B \geq  \hat{\mu}_{SE}^*\right){\mathbb{E}\left[\hat{\mu}_{SE}^*-\hat{\mu}_{a_{K+1}^*}^B\mid\hat{\mu}_{a_{K}^*}^B \geq  \hat{\mu}_{SE}^*>\hat{\mu}_{a_{K+1}^*}^B\right]}\\
		=&P\left(\hat{\mu}_{a_{K+1}^*}^B<\hat{\mu}_{SE}^*\leq \hat{\mu}_{a_{K}^*}^B\right){\mathbb{E}\left[\hat{\mu}_{SE}^*-\hat{\mu}_{a_{K+1}^*}^B\mid\hat{\mu}_{a_{K}^*}^B \geq  \hat{\mu}_{SE}^*>\hat{\mu}_{a_{K+1}^*}^B\right]}.
		\end{split}
		\end{equation}
	\end{normalsize}
	
	Finally, the expected value of $G(K)$ can be expressed as:
	\begin{normalsize}
		\begin{equation} \label{policy}
		\begin{split}
		\mathbb{E}\left[G\left(K\right)\right]
		=&P\left(\hat{\mu}_{a_{K+1}^*}^B<\hat{\mu}_{a_{K}^*}^B <  \hat{\mu}_{SE}^*\right)\underbrace{\mathbb{E}\left[\hat{\mu}_{a_{K}^*}^B-\hat{\mu}_{a_{K+1}^*}^B|\hat{\mu}_{a_{K+1}^*}^B<\hat{\mu}_{a_{K}^*}^B <  \hat{\mu}_{SE}^*\right]}_{>0}\\
		&+P\left(\hat{\mu}_{a_{K}^*}^B \geq  \hat{\mu}_{SE}^*>\hat{\mu}_{a_{K+1}^*}^B\right)\underbrace{\mathbb{E}\left[\hat{\mu}_{SE}^*-\hat{\mu}_{a_{K+1}^*}^B|\hat{\mu}_{a_{K}^*}^B \geq  \hat{\mu}_{SE}^*>\hat{\mu}_{a_{K+1}^*}^B\right]}_{>0}
		\geq  0,
		\end{split}
		\end{equation}
	\end{normalsize}
	where the inequality is strict if and only if $P\left( \hat{\mu}_{SE}^* > \hat{\mu}_{a_{K}^*}^B> \hat{\mu}_{a_{K+1}^*}^B\right)>0$ or $P\left(\hat{\mu}_{a_{K}^*}^B \geq \hat{\mu}_{SE}^*>\hat{\mu}_{a_{K+1}^*}^B\right)>0$.
	
	Further, due to the monotonicity of the expected value of $\min\left\{\hat{\mu}_{a_{K}^*}^B, \hat{\mu}^*_{SE}\right\}$ with regard to $K$ ($1\leq K \leq N$), we can know that the minimum value is at $K=N$, that is $\mathbb{E}\left[\min\left\{\hat{\mu}_{a_{K}^*}^B, \hat{\mu}^*_{SE}\right\}\right] \geq \mathbb{E}\left[\min\left\{\hat{\mu}_{a_{N}^*}^B, \hat{\mu}^*_{SE}\right\}\right]$. Since $\hat{\mu}_{a_{N}^*}^B$ means that we choose the index corresponding to the largest value in $\hat{\boldsymbol{\mu}}^A$ among the all indexs, which is equal to the double estimator, we can further have $\hat{\mu}_{a_{N}^*}^B=\hat{\mu}_{DE}^*$. Hence, $\mathbb{E}\left[\min\left\{\hat{\mu}_{a_{K}^*}^B, \hat{\mu}^*_{SE}\right\}\right] \geq \mathbb{E}\left[\min\left\{\hat{\mu}_{DE}^*, \hat{\mu}^*_{SE}\right\}\right]=\mathbb{E}\left[\hat{\mu}_{CDE}^*\right], 1 \leq K \leq N$.
\end{proof}

\section{B. Proof of Convergence of Action Candidate Based Clipped Double Q-learning}
For current variants of Double Q-learning, there are two main updating methods including random updating and simultaneous updating. In former method, only one Q-function is updated while in latter method, we update both them with the same target value. In this section, we prove that our action candidate based clipped Double Q-learning can converge to the optimal action value for both updaing methods under  finite MDP setting.

\subsection{B.1 Convergence Analysis on Random Updating}
In our action candidate based clipped Double Q-learning (see Algorithm 1 in the paper), we randomly choose one Q-function to update its action value in each time step.
Specifically, with collected experience $\left\langle s_t, a_t, r_t, s_{t+1}\right\rangle$, if we update $Q^A$, the updating formula is shown as below:
\begin{normalsize}
	\begin{equation} \label{policy}
	\begin{split}
	Q_{t+1}^A\left(s_t,a_t\right) \leftarrow Q_t^A\left(s_t,a_t\right) + \alpha_t\left(s_t,a_t\right) \left( r_t+\gamma \min\left\{Q_t^B\left(s_{t+1}, a_K^*\right), Q_t^A\left(s_{t+1}, a^*\right)\right\} - Q_t^A\left(s_t,a_t\right)\right),
	\end{split}
	\end{equation}
\end{normalsize}
where $a_K^*=\arg\max_{a\in\mathcal{M}_K}Q_t^A\left(s_{t+1}, a\right)$ with $\mathcal{M}_K=\left\{a_i\mid Q_t^B\left(s_{t+1}, a_i\right) \in {\rm top\ } K {\rm \ values\ in\ } Q_t^B\left(s_{t+1}, \cdot\right) \right\}$ and $a^* = \arg\max_a Q^A\left(s_{t+1}, a\right)$. Instead, if we update $Q_t^B$, the updating formula is:
\begin{normalsize}
	\begin{equation} \label{policy}
	\begin{split}
	Q_{t+1}^B\left(s_t,a_t\right) \leftarrow Q_t^B\left(s_t,a_t\right) + \alpha_t\left(s_t,a_t\right) \left( r_t+\gamma \min\left\{Q_t^A\left(s_{t+1}, b_K^*\right), Q_t^B\left(s_{t+1}, b^*\right)\right\} - Q_t^B\left(s_t,a_t\right)\right),
	\end{split}
	\end{equation}
\end{normalsize}
where $b_K^*=\arg\max_{a\in\mathcal{M}_K}Q_t^B\left(s_{t+1}, a\right)$ with $\mathcal{M}_K=\left\{a_i\mid Q_t^A\left(s_{t+1}, a_i\right) \in {\rm top\ } K \ {\rm values\ in\ } Q_t^A\left(s_{t+1}, \cdot\right) \right\}$ and $b^* = \arg\max_a Q_t^B\left(s_{t+1}, a\right)$. Next, we prove that our clipped Double Q-learning can converge to the optimal Q-function $Q^*\left(s,a\right)$ under the updating method above.

\begin{lemma}
	Consider a stochastic process $\left(\zeta_{t}, \Delta_{t}, F_{t}\right)$, $t \geq 0$, where $\zeta_{t}$, $\Delta_{t}$, $F_{t}: X \rightarrow \mathbb{R}$ satisfy the equations:
	\begin{normalsize}
		\begin{equation} \label{policy}
		\begin{split}
		\Delta_{t+1}\left(x_{t}\right)=\left(1-\zeta_{t}\left(x_{t}\right)\right) \Delta_{t}\left(x_{t}\right)+\zeta_{t}\left(x_{t}\right) F_{t}\left(x_{t}\right),
		\end{split}
		\end{equation}
	\end{normalsize}
	where $x_{t} \in X$ and $t=0,1,2,\ldots$. Let $P_t$ be a sequence of increasing $\sigma$-fields such that $\zeta_{0}$ and $\Delta_0$ are $P_{0}$-measurable and $\zeta_t$, $\Delta_t$ and $F_{t-1}$ are $P_t$-measure, $t=1, 2, \ldots$. Assume that the following hold: 
	
	1) The set $X$ is finite. 
	
	2) $\zeta_t(x_t) \in [0,1]$, $\sum_{t} \zeta_{t}\left(x_{t}\right)=\infty$, $\sum_{t}\left(\zeta_{t}\left(x_{t}\right)\right)^{2}<\infty$ with probability 1 and $\forall x \neq x_{t}: \zeta_{t}(x)=0$. 
	
	3) $\| \mathbb{E}\left[F_{t} \mid P_{t}\right] \| \leq \kappa\| \Delta_{t}\|+c_{t}$, where $\kappa \in [0,1)$ and $c_t$ converges to zero with probability 1. 
	
	4) $\operatorname{Var}\left[F_{t}\left(x_{t}\right) \mid P_{t}\right] \leq K\left(1+\kappa|| \Delta_{t}||\right)^{2}$, where $K$ is some constant. Here $\|\cdot\|$ denotes a maximum norm. 
	
	Then $\Delta_t$ converges to zero with probability 1.
\end{lemma}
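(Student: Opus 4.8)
The plan is to treat this as a stochastic approximation statement and to combine a martingale-noise argument with the pseudo-contraction supplied by condition 3. First I would decompose the update term into its conditional mean and a zero-mean noise term. Define $w_t(x_t) = F_t(x_t) - \mathbb{E}\left[F_t(x_t)\mid P_t\right]$, so that $w_t$ is a martingale difference with respect to $\{P_t\}$ with bounded conditional variance by condition 4, and write $g_t = \mathbb{E}\left[F_t\mid P_t\right]$. The recursion then reads
\begin{equation}
\Delta_{t+1}(x_t) = \left(1-\zeta_t(x_t)\right)\Delta_t(x_t) + \zeta_t(x_t)\big(g_t(x_t) + w_t(x_t)\big),
\end{equation}
where $\|g_t\|\le \kappa\|\Delta_t\| + c_t$ with $\kappa<1$ and $c_t\to 0$. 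The goal is to show the drift part drives $\Delta_t$ toward zero while the noise part averages out.

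The second step is an auxiliary pure-noise lemma: for the coordinatewise recursion $W_{t+1}(x) = \left(1-\zeta_t(x)\right)W_t(x) + \zeta_t(x)w_t(x)$ driven only by a bounded-variance martingale difference, the Robbins--Monro conditions in condition 2, namely $\sum_t\zeta_t=\infty$ and $\sum_t\zeta_t^2<\infty$, force $W_t\to 0$ almost surely. Because $X$ is finite (condition 1), I would argue one coordinate at a time and take a finite union of probability-one events; each coordinate then follows from a standard supermartingale / Robbins--Siegmund argument applied to $W_t(x)^2$.

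The third step is the contraction iteration. I would first establish that $\sup_t\|\Delta_t\|<\infty$ almost surely, and then run a relaxation argument: fix $\epsilon>0$ and a bound $D$ with $\|\Delta_t\|\le D$ eventually; combining the contraction factor $\kappa$ from $g_t$, the vanishing of $c_t$, and the vanishing noise contribution from the auxiliary lemma, I show that after sufficiently many steps $\|\Delta_t\|\le \kappa D + \epsilon'$ for a small $\epsilon'$. Iterating this relaxation gives $\limsup_t\|\Delta_t\| \le \epsilon'/(1-\kappa)$, and letting $\epsilon'\to 0$ yields $\Delta_t\to 0$ almost surely.

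The main obstacle is the coupling in condition 4: the variance bound $K\left(1+\kappa\|\Delta_t\|\right)^2$ depends on the very quantity $\|\Delta_t\|$ we are trying to control, so the noise term cannot be tamed until boundedness of $\Delta_t$ is known, while the usual route to boundedness already presumes the noise is controlled. I would break this circularity by proving almost-sure boundedness directly --- for instance through a stopping-time or rescaling argument showing the normalized process cannot escape to infinity given $\kappa<1$ and $\sum_t\zeta_t^2<\infty$ --- after which the conditional variance is uniformly bounded on the resulting (random) bounded set and the scaling iteration of the third step goes through cleanly.
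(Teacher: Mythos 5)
The paper never proves this lemma at all: it is quoted as a known black-box result from the stochastic-approximation literature (it is, essentially verbatim, Lemma~1 of Singh, Jaakkola, Littman and Szepesv\'ari, 2000, which extends the convergence theorem of Jaakkola, Jordan and Singh, 1994, by the vanishing perturbation term $c_t$), and it is then invoked without further justification inside the proofs of Theorems~2 and~3. So there is no internal proof to compare against; what you have written is, in outline, exactly the classical argument behind the cited result. Your decomposition $F_t = g_t + w_t$ into conditional mean and martingale-difference noise, the auxiliary lemma that the pure-noise recursion $W_{t+1}(x)=(1-\zeta_t(x))W_t(x)+\zeta_t(x)w_t(x)$ vanishes almost surely under the Robbins--Monro conditions (argued coordinatewise over the finite set $X$, e.g.\ via Robbins--Siegmund applied to $W_t(x)^2$), and the cascade that iterates $\limsup_t\|\Delta_t\|\le \kappa D+\epsilon'$ down to $\epsilon'/(1-\kappa)$ are precisely the steps of the Jaakkola et al.\ proof; and you correctly isolate the genuine technical crux, namely that the conditional-variance bound $K(1+\kappa\|\Delta_t\|)^2$ is coupled to the quantity being controlled, together with its standard cure, a rescaling/stopping-time argument establishing almost-sure boundedness first so that the noise has uniformly bounded variance on the stopped process. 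Two routine points would need care in a full write-up: first, in the cascade step the vanishing perturbation must absorb both $c_t$ and the cross-term from the noise, i.e.\ after splitting $\Delta_t=\Delta_t'+W_t$ one bounds $\|g_t\|\le\kappa\|\Delta_t'\|+\kappa\|W_t\|+c_t$ so that the deterministic comparison recursion for $\Delta_t'$ contracts; second, since $c_t\to 0$ only almost surely (with no uniform bound), the boundedness and localization arguments must be run per sample path rather than with path-independent constants. Neither is a gap in the idea. In short: where the paper buys brevity by citation, your sketch buys self-containedness, and it is the right proof to reconstruct.
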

\begin{theorem}
	Given the following conditions: 
	
	1) Each state action pair is sampled an infinite number of times. 
	
	2) The MDP is finite, that is $|S\times A|< \infty$.
	
	3) $\gamma \in [0, 1)$.
	
	4) Q values are stored in a lookup table.
	
	5) Both $Q^A$ and $Q^B$ receive an infinite number of updates.
	
	6) The learning rates satisfy $\alpha_{t}(s, a) \in[0,1], \sum_{t} \alpha_{t}(s, a)=\infty, \sum_{t}\left(\alpha_{t}(s, a)\right)^{2}<\infty$ with probability 1 and $\alpha_t(s,a)=0, \forall(s, a) \neq\left(s_{t}, a_{t}\right).$
	
	7) $\operatorname{Var}[r(s, a)]<\infty, \forall s, a$.
	

	Then, our proposed action candidate based clipped Double Q-learning under random updating will converge to the optimal value function $Q^*$ with probability 1.
\end{theorem}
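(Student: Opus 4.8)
The plan is to invoke the stochastic-approximation result of Lemma 1 twice: first to drive the inter-estimator gap $Q^B_t - Q^A_t$ to zero, and then, using this, to drive $Q^A_t - Q^*$ (and by symmetry $Q^B_t-Q^*$) to zero. Throughout I would take $X = \mathcal{S}\times\mathcal{A}$ (finite by condition 2), $\zeta_t = \alpha_t$, and $P_t$ the $\sigma$-field generated by the history through the choice of $(s_t,a_t)$; conditions 1, 5 and 6 then supply the measurability and the Robbins--Monro requirements (condition 2 of Lemma 1). Write $\Delta^A_t = Q^A_t - Q^*$, $\Delta^{BA}_t = Q^B_t - Q^A_t$, abbreviate the clipped target terms $M^A_t(s') = \min\{Q^B_t(s',a_K^*),\max_a Q^A_t(s',a)\}$ and $M^B_t(s')=\min\{Q^A_t(s',b_K^*),\max_a Q^B_t(s',a)\}$, and set $H^*(s')=\max_a Q^*(s',a)$. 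Boundedness of the iterates (from $\gamma<1$, the convex-combination form of the update, and finite reward variance in condition 7) yields condition 4 of Lemma 1 in each application, so the substantive work lies entirely in verifying the contraction condition 3.

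For the first application I would track $\Delta^{BA}_t$. The decisive structural feature of random updating is that at step $t$ a single transition $\langle s_t,a_t,r_t,s'\rangle$ is drawn and only then is a fair coin flipped (independently of the transition) to decide whether $Q^A$ or $Q^B$ receives the update; hence the same $r_t$ appears in either branch. Writing the increment of $\Delta^{BA}_t(s_t,a_t)$ as $(1-\alpha_t)\Delta^{BA}_t + \alpha_t F_t$, the branch updating $Q^A$ gives $F_t = Q^B_t(s_t,a_t) - r_t - \gamma M^A_t$ and the branch updating $Q^B$ gives $F_t = r_t + \gamma M^B_t - Q^A_t(s_t,a_t)$. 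Taking the conditional expectation and averaging over the coin, the reward terms cancel and I obtain $\mathbb{E}[F_t\mid P_t] = \tfrac12\Delta^{BA}_t(s_t,a_t) + \tfrac{\gamma}{2}\,\mathbb{E}_{s'}[M^B_t - M^A_t]$. I would then prove the Lipschitz-type bound $\|M^B_t - M^A_t\|\le \|\Delta^{BA}_t\|$ (note the two clipped-candidate operators coincide when $Q^A=Q^B$), which yields $\|\mathbb{E}[F_t\mid P_t]\|\le \tfrac{1+\gamma}{2}\|\Delta^{BA}_t\|$. Since $\kappa = \tfrac{1+\gamma}{2}<1$ for $\gamma\in[0,1)$, Lemma 1 gives $\|Q^B_t - Q^A_t\|\to 0$ with probability $1$.

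For the second application I would track $\Delta^A_t$ on the steps that update $Q^A$, writing the increment as $(1-\alpha_t)\Delta^A_t + \alpha_t F_t$ with $F_t = r_t + \gamma M^A_t - Q^*(s_t,a_t)$. Subtracting the Bellman optimality equation for $Q^*$ gives $\mathbb{E}[F_t\mid P_t] = \gamma\,\mathbb{E}_{s'}[M^A_t - H^*(s')]$, so it suffices to bound $|M^A_t - H^*|$ for each $s'$. The upper bound $M^A_t \le \max_a Q^A_t(s',a)\le H^* + \|\Delta^A_t\|$ is immediate. For the lower bound I would split on which argument attains the minimum: if it is $\max_a Q^A_t$, then $M^A_t\ge H^* - \|\Delta^A_t\|$ at once; if it is $Q^B_t(s',a_K^*)$, I would split further on whether the greedy action $\bar a = \arg\max_a Q^*(s',a)$ lies in the candidate set $\boldsymbol{\mathcal{M}}_K$. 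When $\bar a\in\boldsymbol{\mathcal{M}}_K$, maximality of $a_K^*$ over the candidates gives $Q^A_t(s',a_K^*)\ge Q^A_t(s',\bar a)\ge H^*-\|\Delta^A_t\|$, and one more step through $\|Q^A_t-Q^B_t\|$ converts this to $Q^B_t(s',a_K^*)$; when $\bar a\notin\boldsymbol{\mathcal{M}}_K$, every candidate (in particular $a_K^*$) outranks $\bar a$ in $Q^B_t$, so $Q^B_t(s',a_K^*)\ge Q^B_t(s',\bar a)\ge H^*-\|\Delta^B_t\|$, and $\|\Delta^B_t\|\le\|\Delta^A_t\|+\|Q^A_t-Q^B_t\|$. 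In all cases $|M^A_t - H^*|\le \|\Delta^A_t\| + \|Q^A_t - Q^B_t\|$, so condition 3 holds with $\kappa=\gamma$ and $c_t = \gamma\|Q^A_t - Q^B_t\|$, which tends to zero by the first part. Lemma 1 then gives $Q^A_t\to Q^*$, and the symmetric argument gives $Q^B_t\to Q^*$.

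I expect the main obstacle to be the two operator estimates specific to this algorithm — $\|M^B_t - M^A_t\|\le\|Q^B_t-Q^A_t\|$ for the coupling step and $|M^A_t-H^*|\le\|\Delta^A_t\|+\|Q^A_t-Q^B_t\|$ for the optimality step — because the interaction of the clip $\min\{\cdot,\cdot\}$ with the top-$K$ candidate set $\boldsymbol{\mathcal{M}}_K$ and the argmax-over-a-subset $a_K^*$ forces a genuine case analysis (on which term is clipped, and on whether the greedy action survives among the candidates) rather than the one-line Lipschitz estimate available for plain Double Q-learning. Once these are in hand, the reward-cancellation identity and the two applications of Lemma 1 are routine adaptations of the standard Double Q-learning convergence argument.
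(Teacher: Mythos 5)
Your proposal is correct, and its skeleton (Lemma 1 applied twice: once to the coupled process $\Delta^{BA}_t=Q^B_t-Q^A_t$, once to the optimality gap with a vanishing $c_t$) matches the paper's, but your two key estimates take a genuinely different route. For the coupling step the paper works \emph{in expectation}: it sets $\zeta^{BA}_t=\alpha_t/2$, invokes the monotonicity-in-$K$ of its Theorem 1 to slide one candidate argmax to $K=1$ and the other to $K=N$ (where both degenerate to the ordinary argmax, $a^*=b_1^*=a_N^*$), and then splits on the sign of $Q^B_t(s_{t+1},a^*)-Q^A_t(s_{t+1},a^*)$, in two symmetric cases according to which conditional expectation is larger, to get $\lvert\mathbb{E}[F^{BA}_t\mid P_t]\rvert\le\gamma\lVert\Delta^{BA}_t\rVert$. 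You instead assert the stronger \emph{deterministic} bound $\lVert M^B_t-M^A_t\rVert\le\lVert\Delta^{BA}_t\rVert$ and keep $\zeta_t=\alpha_t$ with $\kappa=\tfrac{1+\gamma}{2}$ --- algebraically the same contraction, packaged pointwise. That bound, which you flag as the main obstacle but do not prove, is in fact true, by precisely the kind of case analysis you anticipate: split on whether $a_K^*$ lies in the top-$K$ set of $Q^A_t(s',\cdot)$; if it does, maximality of $b_K^*$ over that set gives $Q^B_t(s',b_K^*)\ge Q^B_t(s',a_K^*)$, and if it does not, then $Q^A_t(s',b_K^*)\ge Q^A_t(s',a_K^*)$; either way $Q^A_t(s',b_K^*)\ge M^A_t-\lVert\Delta^{BA}_t\rVert$, while $\max_a Q^B_t(s',a)\ge\max_a Q^A_t(s',a)-\lVert\Delta^{BA}_t\rVert\ge M^A_t-\lVert\Delta^{BA}_t\rVert$, so $M^B_t\ge M^A_t-\lVert\Delta^{BA}_t\rVert$, and symmetrically. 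For the optimality step the paper recycles the well-known Q-learning bound $\mathbb{E}[F^Q_t\mid P_t]\le\gamma\lVert\Delta_t\rVert$ and identifies $c_t=\gamma\bigl(\min\{Q^B_t(s_{t+1},a_K^*),Q^A_t(s_{t+1},a^*)\}-Q^A_t(s_{t+1},a^*)\bigr)$, asserting (without an explicit estimate in the random-updating section) that it vanishes once $\Delta^{BA}_t\to0$; you instead derive $\mathbb{E}[F_t\mid P_t]=\gamma\,\mathbb{E}_{s'}[M^A_t-H^*(s')]$ directly from the Bellman optimality equation and prove the two-sided bound $\lvert M^A_t-H^*\rvert\le\lVert\Delta^A_t\rVert+\lVert Q^A_t-Q^B_t\rVert$ by a correct split on which arm of the clip binds and on whether $\bar a\in\boldsymbol{\mathcal{M}}_K$, yielding condition 3 with the explicit $c_t=\gamma\lVert Q^A_t-Q^B_t\rVert$. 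What each approach buys: the paper's argument lets its estimator-level Theorem 1 do double duty, but transferring that expectation-level monotonicity (proved for i.i.d.\ sample-average estimators) to the conditioned stochastic process is the loosest step of its proof; your version is more elementary and self-contained, with deterministic Lipschitz-type operator bounds and explicit constants, at the cost of having to establish those two operator estimates from scratch --- which, as checked above, goes through.
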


\begin{proof}
	We apply Lemma 1 with $P_{t}=\left\{Q_{0}^{A}, Q_{0}^{B}, s_{0}, a_{0}, \alpha_{0}, r_{1}, s_{1}, \ldots, s_{t}, a_{t}\right\}$, $X=S \times A$, $\Delta_{t}=Q_{t}^{A}-Q^*$, $\zeta_{t}=\alpha_{t}$ and $F_t\left(s_t,a_t\right)=r_t+\gamma\min\left\{Q_t^B(s_{t+1},a_K^*), Q_t^A(s_{t+1}, a^*)\right\}-Q^*\left(s_t, a_t\right)$, where $a^*=\arg \max_a Q_t^A(s_{t+1},a)$ and $a_K^*=\rm{arg}\max_{a\in\boldsymbol{\mathcal{M}}_K}Q^A(s_{t+1}, a)$.
	The conditions 1 and 4 of  Lemma 1 can hold by the conditions 2 and 7 of Theorem 1, respectively. 
	Condition 2 in Lemma 1 holds by the condition 6 in Theorem 2 along with our selection of $\zeta_t=\alpha_t$.
	
	Then, we just need verify the condition 3 on the expected condition of $F_t$ holds. We can write:
	\begin{normalsize}
		\begin{equation} \label{policy}
		\begin{split}
		F_t\left(s_t,a_t\right)&=r_t+\gamma\min\left\{Q_t^B(s_{t+1},a_K^*), Q_t^A(s_{t+1}, a^*)\right\}-Q^*\left(s_t, a_t\right) + \gamma Q_t^A\left(s_t, a_t\right) - \gamma Q_t^A\left(s_t, a_t\right)\\
		&=r_t + \gamma Q_t^A(s_t, a_t) - Q^*\left(s_t, a_t\right) + \gamma\min\left\{Q_t^B(s_{t+1},a_K^*), Q_t^A(s_{t+1}, a^*)\right\} - \gamma Q_t^A\left(s_t, a_t\right)\\
		&=F_t^Q\left(s_t, a_t\right) + \gamma\min\left\{Q_t^B(s_{t+1},a_K^*), Q_t^A(s_{t+1}, a^*)\right\} - \gamma Q_t^A\left(s_t, a_t\right),
		\end{split}
		\end{equation}
	\end{normalsize}
	where $F_t^Q = r_t + \gamma Q_t^A(s_t, a_t) - Q^*\left(s_t, a_t\right)$ is the value of $F_t$ if normal Q-learning would be under consideration. It is wll-known that $\mathbb{E}\left[F_t^Q|P_t\right] \leq \gamma\|\Delta_t\|$, so in order to apply the lemma we identify $c_t=\gamma\min\left\{Q_t^B(s_{t+1},a_K^*), Q_t^A(s_{t+1}, a^*)\right\} - \gamma Q_t^A\left(s_t, a_t\right)$ and it suffices to show that $\Delta_t^{BA}=Q_t^B - Q_t^A$ converges to to zero. Depending on whether $Q^B$ or $Q^A$ is updated, the update of $\Delta_t^{BA}$ at time $t$ is either
	\begin{normalsize}
		\begin{equation} \label{policy}
		\begin{split}
		\Delta_{t+1}^{B A}\left(s_{t}, a_{t}\right) &=\Delta_{t}^{B A}\left(s_{t}, a_{t}\right)+\alpha_{t}\left(s_{t}, a_{t}\right) F_{t}^{B}\left(s_{t}, a_{t}\right), \text { or } \\ \Delta_{t+1}^{B A}\left(s_{t}, a_{t}\right) &=\Delta_{t}^{B A}\left(s_{t}, a_{t}\right)-\alpha_{t}\left(s_{t}, a_{t}\right) F_{t}^{A}\left(s_{t}, a_{t}\right),
		\end{split}
		\end{equation}
	\end{normalsize}
	where $F_t^A\left(s_t, a_t\right)=r_t+\gamma\min\left\{Q_t^B(s_{t+1},a_K^*), Q_t^A(s_{t+1}, a^*)\right\}-Q_t^A\left(s_t, a_t\right)$ and $F_t^B\left(s_t, a_t\right)=r_t+\gamma\min\left\{Q_t^A(s_{t+1},b_K^*), Q_t^B(s_{t+1}, b^*)\right\}-Q_t^B\left(s_t, a_t\right)$. We define $\zeta_{t}^{B A}=\frac{1}{2} \alpha_{t}$. Then,
	\begin{normalsize}
		\begin{equation}
		\begin{split}
		\mathbb{E}\left[\Delta_{t+1}^{B A}\left(s_{t}, a_{t}\right) \mid P_{t}\right] &=\Delta_{t}^{B A}\left(s_{t}, a_{t}\right)+\mathbb{E}\left[\alpha_{t}\left(s_{t}, a_{t}\right) F_{t}^{B}\left(s_{t}, a_{t}\right)-\alpha_{t}\left(s_{t}, a_{t}\right) F_{t}^{A}\left(s_{t}, a_{t}\right) \mid P_{t}\right] \\ &=\left(1-\zeta_{t}^{B A}\left(s_{t}, a_{t}\right)\right) \Delta_{t}^{B A}\left(s_{t}, a_{t}\right)+\zeta_{t}^{B A}\left(s_{t}, a_{t}\right) \mathbb{E}\left[F_{t}^{B A}\left(s_{t}, a_{t}\right) \mid P_{t}\right],
		\end{split}
		\end{equation}
	\end{normalsize}
	where $\mathbb{E}\left[F_t^{BA}\left(s_t, a_t\right)\mid P_t\right]=\gamma\mathbb{E}\left[\min\left\{Q_t^A(s_{t+1},b_K^*), Q_t^B(s_{t+1}, b^*)\right\}-\min\left\{Q_t^B(s_{t+1},a_K^*), Q_t^A(s_{t+1}, a^*)\right\} \mid P_t \right]$. For this step it is important that the selection whether to update $Q^A$ or $Q^B$ is independent on the sample (e.g. random).
	
	Assume $\mathbb{E}\left[\min\left\{Q_t^A(s_{t+1},b_K^*), Q_t^B(s_{t+1}, b^*)\right\}\right]\geq\mathbb{E}\left[\min\left\{Q_t^B(s_{t+1},a_K^*), Q_t^A(s_{t+1}, a^*)\right\} \mid P_t \right]$. Then,
	\begin{normalsize}
		\begin{equation}
		\begin{split}
		&\left|\mathbb{E}\left[F_t^{BA}\left(s_t, a_t\right)\mid P_t\right]\right|=
		\gamma\mathbb{E}\left[\min\left\{Q_t^A(s_{t+1},b_K^*), Q_t^B(s_{t+1}, b^*)\right\}-\min\left\{Q_t^B(s_{t+1},a_K^*), Q_t^A(s_{t+1}, a^*)\right\}\mid P_t\right]\\
		\leq & \gamma\mathbb{E}\left[\min\left\{Q_t^A(s_{t+1},a^*), Q_t^B(s_{t+1}, b^*)\right\}-\min\left\{Q_t^B(s_{t+1},a^*), Q_t^A(s_{t+1}, a^*)\right\}\mid P_t\right]\\
		\leq & \gamma\mathbb{E}\left[Q_t^A\left(s_{t+1}, a^*\right) \mid P_t\right] - \gamma\mathbb{E}\left[\min\left\{Q_t^B(s_{t+1},a^*), Q_t^A(s_{t+1}, a^*)\right\} \mid P_t\right]\\
		= & \gamma P\left(Q_t^B\left(s_{t+1}, a^*\right) \geq Q_t^A\left(s_{t+1}, a^*\right) \mid P_t \right) \mathbb{E}\left[Q_t^A(s_{t+1}, a^*)\mid Q_t^B\left(s_{t+1}, a^*\right) \geq Q_t^A\left(s_{t+1}, a^*\right), P_t \right]\\
		& + \gamma P\left(Q_t^B\left(s_{t+1}, a^*\right) < Q_t^A\left(s_{t+1}, a^*\right) \mid P_t \right) \mathbb{E}\left[Q_t^A(s_{t+1}, a^*)\mid Q_t^B\left(s_{t+1}, a^*\right) < Q_t^A\left(s_{t+1}, a^*\right), P_t \right]\\
		& - \gamma P\left(Q_t^B\left(s_{t+1}, a^*\right) \geq Q_t^A\left(s_{t+1}, a^*\right) \mid P_t \right) \mathbb{E}\left[Q_t^A(s_{t+1}, a^*)\mid Q_t^B\left(s_{t+1}, a^*\right) \geq Q_t^A\left(s_{t+1}, a^*\right), P_t \right]\\
		& - \gamma P\left(Q_t^B\left(s_{t+1}, a^*\right) < Q_t^A\left(s_{t+1}, a^*\right) \mid P_t \right) \mathbb{E}\left[Q_t^B(s_{t+1}, a^*)\mid Q_t^B\left(s_{t+1}, a^*\right) < Q_t^A\left(s_{t+1}, a^*\right), P_t \right]\\
		= & \gamma P\left(Q_t^B\left(s_{t+1}, a^*\right) < Q_t^A\left(s_{t+1}, a^*\right) \mid P_t \right)\mathbb{E}\left[\underbrace{Q_t^A(s_{t+1}, a^*)-Q_t^B(s_{t+1}, a^*)}_{\leq\|\Delta_t^{BA}\|} \mid Q_t^B\left(s_{t+1}, a^*\right) < Q_t^A\left(s_{t+1}, a^*\right), P_t\right]\\
		\leq & \gamma P\left(Q_t^B\left(s_{t+1}, a^*\right) < Q_t^A\left(s_{t+1}, a^*\right) \mid P_t \right)\|\Delta_t^{BA}\|\leq \gamma \|\Delta_t^{BA}\|,
		\end{split}
		\end{equation}
	\end{normalsize}
	where the first inequality is based on the monotonicity in Theorem 1. From Theorem 1, we have that the expected value of $\min\left\{Q_t^A(s_{t+1},b_K^*), Q_t^B(s_{t+1}, b^*)\right\}-\min\left\{Q_t^B(s_{t+1},a_K^*), Q_t^A(s_{t+1}, a^*)\right\}$ is no more than the one of $\min\left\{Q_t^A(s_{t+1},b_1^*), Q_t^B(s_{t+1}, b^*)\right\}-\min\left\{Q_t^B(s_{t+1},a_N^*), Q_t^A(s_{t+1}, a^*)\right\}$. Since $a^*=b_1^*=a_N^*$, we can have the first inequality above.
	The second inequality is based on that since $\min\left\{Q_t^A(s_{t+1},a^*), Q_t^B(s_{t+1}, b^*)\right\}$ is no more than $Q_t^A(s_{t+1},a^*)$, the expected value of the former is also no more than the one of latter.
	
	Now assume $\mathbb{E}\left[\min\left\{Q_t^A(s_{t+1},b_K^*), Q_t^B(s_{t+1}, b^*)\right\}\right] <\mathbb{E}\left[\min\left\{Q_t^B(s_{t+1},a_K^*), Q_t^A(s_{t+1}, a^*)\right\} \mid P_t \right]$ and therefore
	\begin{normalsize}
		\begin{equation}
		\begin{split}
		&\left|\mathbb{E}\left[F_t^{BA}\left(s_t, a_t\right)\mid P_t\right]\right|=
		\gamma\mathbb{E}\left[\min\left\{Q_t^B(s_{t+1},a_K^*), Q_t^A(s_{t+1}, a^*)\right\} - \min\left\{Q_t^A(s_{t+1},b_K^*), Q_t^B(s_{t+1}, b^*)\right\}\mid P_t\right]\\
		\leq & \gamma\mathbb{E}\left[\min\left\{Q_t^B(s_{t+1},b^*), Q_t^A(s_{t+1}, a^*)\right\} - \min\left\{Q_t^A(s_{t+1},b^*), Q_t^B(s_{t+1}, b^*)\right\}\mid P_t\right]\\
		\leq & \gamma\mathbb{E}\left[Q_t^B\left(s_{t+1}, b^*\right) \mid P_t\right] - \gamma\mathbb{E}\left[\min\left\{Q_t^A(s_{t+1},b^*), Q_t^B(s_{t+1}, b^*)\right\} \mid P_t\right]\\
		= & \gamma P\left(Q_t^B\left(s_{t+1}, b^*\right) \geq Q_t^A\left(s_{t+1}, b^*\right) \mid P_t \right) \mathbb{E}\left[Q_t^B(s_{t+1}, b^*)\mid Q_t^B\left(s_{t+1}, b^*\right) \geq Q_t^A\left(s_{t+1}, b^*\right), P_t \right]\\
		& +  \gamma P\left(Q_t^B\left(s_{t+1}, b^*\right) < Q_t^A\left(s_{t+1}, b^*\right) \mid P_t \right) \mathbb{E}\left[Q_t^B(s_{t+1}, b^*)\mid Q_t^B\left(s_{t+1}, b^*\right) < Q_t^A\left(s_{t+1}, b^*\right), P_t \right]\\
		& - \gamma P\left(Q_t^B\left(s_{t+1}, b^*\right) \geq Q_t^A\left(s_{t+1}, b^*\right) \mid P_t \right) \mathbb{E}\left[Q_t^A(s_{t+1}, b^*)\mid Q_t^B\left(s_{t+1}, b^*\right) \geq Q_t^A\left(s_{t+1}, b^*\right), P_t \right]\\
		& - \gamma P\left(Q_t^B\left(s_{t+1}, b^*\right) < Q_t^A\left(s_{t+1}, b^*\right) \mid P_t \right) \mathbb{E}\left[Q_t^B(s_{t+1}, b^*)\mid Q_t^B\left(s_{t+1}, b^*\right) < Q_t^A\left(s_{t+1}, b^*\right), P_t \right]\\
		= & \gamma P\left(Q_t^B\left(s_{t+1}, b^*\right) \geq Q_t^A\left(s_{t+1}, b^*\right) \mid P_t \right)\mathbb{E}\left[\underbrace{Q_t^B(s_{t+1}, b^*)-Q_t^A(s_{t+1}, b^*)}_{\leq\|\Delta_t^{BA}\|} \mid Q_t^B\left(s_{t+1}, b^*\right) \geq Q_t^A\left(s_{t+1}, b^*\right), P_t\right]\\
		\leq & \gamma P\left(Q_t^B\left(s_{t+1}, a^*\right) \geq Q_t^A\left(s_{t+1}, a^*\right) \mid P_t \right)\|\Delta_t^{BA}\|\leq \gamma \|\Delta_t^{BA}\|,
		\end{split}
		\end{equation}
	\end{normalsize}
	where the first inequality is based on the monotonicity in Theorem 1. From Theorem 1, we have the expected value of $\min\left\{Q_t^B(s_{t+1},a_K^*), Q_t^A(s_{t+1}, a^*)\right\} - \min\left\{Q_t^A(s_{t+1},b_K^*), Q_t^B(s_{t+1}, b^*)\right\}$ is no more than the one of $\min\left\{Q_t^B(s_{t+1},a_1^*), Q_t^A(s_{t+1}, a^*)\right\} - \min\left\{Q_t^A(s_{t+1},b_N^*), Q_t^B(s_{t+1}, b^*)\right\}$. Since $b^*=a_1^*=b_N^*$, we can have the first inequality above.
	The second inequality is based on that since $\min\left\{Q_t^B(s_{t+1},b^*), Q_t^A(s_{t+1}, a^*)\right\}$ is no more than $Q_t^B(s_{t+1},b^*)$, the expected value of the former is also no more than the one of latter.

	Clearly, one of the assumptions must hold at each time step and in both cases we obtain the desired result that $\left|\mathbb{E}\left[F_t^{BA} \mid P_t\right]\right| \leq \gamma \|\Delta_t^{BA}\|$. Applying the lemma yields convergence of $\Delta_t^{BA}$ to zero, which in turn ensures that the original process also converges in the limit.
\end{proof}

\subsection{B.2 Convergence Analysis on Simultaneous Updating}
In Algorithm 2 of the paper, we update our two Q-functions with the same target value in each time step. In this section, we further prove that our action candidate based clipped Double Q-learning can also converge to the optimal Q-function $Q^*\left(s,a\right)$ under such updating method.

Specificially, with the collected experience $\langle s_t, a_t, r_t, s_{t+1}\rangle$, we set the target value $y_t$ as below:
\begin{normalsize}
	\begin{equation} \label{policy}
	\begin{split}
	y_t = r_t + \gamma\min\left\{Q^B\left(s_{t+1}, a_K^*\right), Q^A\left(s_{t+1}, a^*\right)\right\},
	\end{split}
	\end{equation}
\end{normalsize}
where $a_K^*=\arg\max_{a\in\mathcal{M}_K}Q^A\left(s_{t+1}, a\right)$ with $\mathcal{M}_K=\left\{a_i\mid Q^B\left(s_{t+1}, a_i\right) \in {\rm top\ } K {\rm\ values\ in\ } Q^B\left(s_{t+1}, \cdot\right) \right\}$ and $a^* = \arg\max_a Q^A\left(s_{t+1}, a\right)$. Then, both Q-functions are updated as below:
\begin{normalsize}
	\begin{equation} \label{policy}
	\begin{split}
	&Q_{t+1}^A\left(s_t, a_t\right)\leftarrow Q_t^A\left(s_t, a_t\right) + \alpha_t\left(s_t, a_t\right)\left(y_t - Q_t^A\left(s_t, a_t\right)\right)\\
	&Q_{t+1}^B\left(s_t, a_t\right)\leftarrow Q_t^B\left(s_t, a_t\right) + \alpha_t\left(s_t, a_t\right)\left(y_t - Q_t^B\left(s_t, a_t\right)\right)
	\end{split}
	\end{equation}
\end{normalsize}
\begin{theorem}
	Given the following conditions: 
	
	1) Each state action pair is sampled an infinite number of times. 
	
	2) The MDP is finite, that is $|S\times A|< \infty$.
	
	3) $\gamma \in [0, 1)$.
	
	4) Q values are stored in a lookup table.
	
	5) Both $Q^A$ and $Q^B$ receive an infinite number of updates.
	
	6) The learning rates satisfy $\alpha_{t}(s, a) \in[0,1], \sum_{t} \alpha_{t}(s, a)=\infty, \sum_{t}\left(\alpha_{t}(s, a)\right)^{2}<\infty$ with probability 1 and $\alpha_t(s,a)=0, \forall(s, a) \neq\left(s_{t}, a_{t}\right).$
	
	7) $\operatorname{Var}[r(s, a)]<\infty, \forall s, a$.
	
	
	Then, our proposed action candidate based clipped Double Q-learning under simultaneous updating will converge to the optimal value function $Q^*$ with probability 1.
\end{theorem}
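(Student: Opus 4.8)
The plan is to invoke Lemma 1 with $X = \mathcal{S}\times\mathcal{A}$, $\zeta_t = \alpha_t$, $\Delta_t = Q_t^A - Q^*$ and
\[
F_t(s_t,a_t) = r_t + \gamma\min\left\{Q_t^B(s_{t+1},a_K^*),\, Q_t^A(s_{t+1},a^*)\right\} - Q^*(s_t,a_t),
\]
where $a^* = \arg\max_a Q_t^A(s_{t+1},a)$ and $a_K^* = \arg\max_{a\in\boldsymbol{\mathcal{M}}_K} Q_t^A(s_{t+1},a)$. Conditions 1, 2 and 4 of Lemma 1 transfer verbatim from conditions 2, 6 and 7 of the theorem together with the choice $\zeta_t=\alpha_t$, so the whole argument reduces to checking the contraction condition 3 on $\mathbb{E}[F_t\mid P_t]$, exactly as in the random-updating case.

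The first step, and the point where simultaneous updating is genuinely easier than random updating, is to show that $\Delta_t^{AB} := Q_t^A - Q_t^B$ vanishes. Because both tables are updated toward the \emph{same} target $y_t$ with the \emph{same} step size $\alpha_t(s_t,a_t)$, subtracting the two update rules gives $\Delta_{t+1}^{AB}(s_t,a_t) = (1-\alpha_t(s_t,a_t))\,\Delta_t^{AB}(s_t,a_t)$ at the visited pair and leaves every other entry unchanged. Hence, for each fixed $(s,a)$, $\Delta_t^{AB}(s,a)$ equals its initial value times a product of factors $1-\alpha_t(s,a)\in[0,1]$, and since $\sum_t\alpha_t(s,a)=\infty$ this product converges to $0$; finiteness of $\mathcal{S}\times\mathcal{A}$ then yields $\|\Delta_t^{AB}\|\to 0$ with probability $1$, directly and without appealing to the stochastic-approximation lemma.

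The key step is to split $F_t = F_t^Q + c_t$, where $F_t^Q = r_t + \gamma\max_a Q_t^A(s_{t+1},a) - Q^*(s_t,a_t)$ is the ordinary Q-learning error obeying the standard contraction $\|\mathbb{E}[F_t^Q\mid P_t]\|\le\gamma\|\Delta_t\|$, and
\[
c_t = \gamma\left(\min\left\{Q_t^B(s_{t+1},a_K^*),\,Q_t^A(s_{t+1},a^*)\right\} - Q_t^A(s_{t+1},a^*)\right)\le 0 .
\]
I would then bound $|c_t|\le\gamma\|\Delta_t^{AB}\|$ by a short case analysis. Writing $M = Q_t^A(s_{t+1},a^*)$: if $a^*\in\boldsymbol{\mathcal{M}}_K$ then $a_K^*$ attains the same $Q^A$-value $M$, so $Q_t^B(s_{t+1},a_K^*)\ge M-\|\Delta_t^{AB}\|$; if $a^*\notin\boldsymbol{\mathcal{M}}_K$ then every index in $\boldsymbol{\mathcal{M}}_K$, and in particular $a_K^*$, has $Q^B$-value at least $Q_t^B(s_{t+1},a^*)\ge M-\|\Delta_t^{AB}\|$ by the top-$K$ definition of $\boldsymbol{\mathcal{M}}_K$. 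In either case the clipped minimum is at least $M-\|\Delta_t^{AB}\|$, giving $|c_t|\le\gamma\|\Delta_t^{AB}\|$ pointwise in $s_{t+1}$, hence $|\mathbb{E}[c_t\mid P_t]|\le\gamma\|\Delta_t^{AB}\|$. Identifying the vanishing sequence of condition 3 with the $P_t$-measurable quantity $\gamma\|\Delta_t^{AB}\|\to 0$ and taking $\kappa=\gamma<1$, while checking the variance bound (condition 4) from $\operatorname{Var}[r]<\infty$ and boundedness on the finite space, Lemma 1 yields $\Delta_t=Q_t^A-Q^*\to 0$; combined with step one this forces $Q_t^B\to Q^*$ as well.

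The main obstacle I anticipate is precisely the bound $|c_t|\le\gamma\|\Delta_t^{AB}\|$, and within it the case $a^*\notin\boldsymbol{\mathcal{M}}_K$: here $a_K^*$ may differ from $a^*$, and one must exploit that $a_K^*$ still lies in the top-$K$ block of $Q^B$ to lower-bound $Q_t^B(s_{t+1},a_K^*)$ by $Q_t^B(s_{t+1},a^*)$ before transferring to $Q^A$ through $\|\Delta_t^{AB}\|$. Everything else is a transcription of the standard double Q-learning convergence template, now simplified because $\|Q^A-Q^B\|\to 0$ is obtained immediately from the shared target rather than via a separate stochastic-approximation argument.
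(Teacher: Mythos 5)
Your proposal is correct and follows the same skeleton as the paper's proof: the same instantiation of Lemma 1 (with $\Delta_t = Q_t^A - Q^*$, $\zeta_t = \alpha_t$), the same decomposition $F_t = F_t^Q + c_t$ with $c_t = \gamma\left(\min\left\{Q_t^B(s_{t+1},a_K^*),\, Q_t^A(s_{t+1},a^*)\right\} - Q_t^A(s_{t+1},a^*)\right)$, and the same observation that simultaneous updating toward a shared target gives $\Delta_{t+1}^{AB}(s_t,a_t) = (1-\alpha_t(s_t,a_t))\,\Delta_t^{AB}(s_t,a_t)$, so that $Q^A - Q^B$ vanishes deterministically (the paper asserts this with ``clearly''; you supply the actual product argument from $\sum_t \alpha_t = \infty$). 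Where you genuinely diverge---and improve on the paper---is the bound on $c_t$. Since $c_t \le 0$ by construction, what the lemma requires is a \emph{lower} bound tending to zero. The paper instead chains $\min\{\cdots\} \le Q_t^B(s_{t+1},a_K^*) \le Q_t^B(s_{t+1},a_1^*) = Q_t^B(s_{t+1},b^*)$ via Property 1, together with $Q_t^A(s_{t+1},a^*) \ge Q_t^A(s_{t+1},b^*)$, obtaining only the upper bound $c_t \le \gamma\left(Q_t^B(s_{t+1},b^*) - Q_t^A(s_{t+1},b^*)\right)$, which by itself does not force $c_t \to 0$. Your case analysis on whether $a^* \in \boldsymbol{\mathcal{M}}_K$---when $a^* \in \boldsymbol{\mathcal{M}}_K$ the maximizer $a_K^*$ attains the global $Q^A$-maximum, and when $a^* \notin \boldsymbol{\mathcal{M}}_K$ the top-$K$ property of $\boldsymbol{\mathcal{M}}_K$ gives $Q_t^B(s_{t+1},a_K^*) \ge Q_t^B(s_{t+1},a^*)$---yields the two-sided bound $-\gamma\|\Delta_t^{AB}\| \le c_t \le 0$, which is exactly the direction needed to verify condition 3 of Lemma 1. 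So your argument is not merely valid but patches the directional gap in the paper's displayed inequality; the remaining steps (invoking Lemma 1 to get $Q^A \to Q^*$, then transferring to $Q^B$ via the vanishing difference, where the paper instead reruns the argument with $\Delta_t = Q_t^B - Q^*$) coincide in substance.
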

\begin{proof}
	We apply Lemma 1 with $P_{t}=\left\{Q_{0}^{A}, Q_{0}^{B}, s_{0}, a_{0}, \alpha_{0}, r_{1}, s_{1}, \ldots, s_{t}, a_{t}\right\}$, $X=S \times A$, $\Delta_{t}=Q_{t}^{A}-Q^*$, $\zeta_{t}=\alpha_{t}$ and $F_t\left(s_t,a_t\right)=r_t+\gamma\min\left\{Q_t^B(s_{t+1},a_K^*), Q_t^A(s_{t+1}, a^*)\right\}-Q^*\left(s_t, a_t\right)$, where $a^*=\arg \max_a Q_t^A(s_{t+1},a)$ and $a_K^*=\rm{arg}\max_{a\in\boldsymbol{\mathcal{M}}_K}Q^A(s_{t+1}, a)$.
	The conditions 1 and 4 of  Lemma 1 can hold by the conditions 2 and 7 of Theorem 3, respectively. 
	Condition 2 in Lemma 1 holds by the condition 6 in Theorem 3 along with our selection of $\zeta_t=\alpha_t$. Further, we have
	\begin{normalsize}
		\begin{equation} \label{policy}
		\begin{split}
		\Delta_{t+1}\left(s_t, a_t\right)&=\left(1-\alpha_t\left(s_t, a_t\right)\right)\left(Q_t^A\left(s_t, a_t\right)-Q^*\left(s_t, a_t\right)\right)+\alpha_t\left(s_t, a_t\right)\left(y_t-Q^*\left(s_t, a_t\right)\right)\\
		& = \left(1-\alpha_t\left(s_t, a_t\right)\right)\Delta_t\left(s_t, a_t\right) + \alpha_t\left(s_t, a_t\right)F_t\left(s_t, a_t\right),
		\end{split}
		\end{equation}
	\end{normalsize}
	where we have defined $F_t\left(s_t, a_t\right)$ as:
	\begin{normalsize}
		\begin{equation} \label{policy}
		\begin{split}
		F_t\left(s_t, a_t\right) &= y_t - Q_t^*\left(s_t, a_t\right) 
		= y_t - Q_t^*\left(s_t, a_t\right) + \gamma Q_t^A\left(s_{t+1}, a^*\right) - \gamma Q_t^A\left(s_{t+1}, a^*\right)\\
		&=F_t^Q(s_t, a_t) + c_t,
		\end{split}
		\end{equation}
	\end{normalsize}
	where $F_t^Q=r_t + \gamma Q_t^A\left(s_{t+1}, a^*\right) - Q_t^*\left(s_t, a_t\right)$ denotes the value of $F_t$ under standard Q-learing and 
	\begin{normalsize}
		\begin{equation} \label{policy}
		\begin{split}
		c_t=\gamma\min\left\{Q^B\left(s_{t+1}, a_K^*\right), Q^A\left(s_{t+1}, a^*\right)\right\}-\gamma Q_t^A\left(s_{t+1}, a^*\right).
		\end{split}
		\end{equation}
	\end{normalsize}
	As $\mathbb{E}\left[F_t^Q|P_t\right]\leq\gamma\|\Delta_t\|$ is a well-known result, condition 3 of Lemma 1 holds if it can be shown that $c_t$ converges to 0 with probability 1.
	Further, due to $\min\left\{Q^B\left(s_{t+1}, a_K^*\right), Q^A\left(s_{t+1}, a^*\right)\right\}$ is no more than $Q^B\left(s_{t+1}, a_K^*\right)$ and $Q^B\left(s_{t+1}, a_K^*\right)$ is also no more than $Q^B\left(s_{t+1}, a_1^*\right)$ (based on the Property 1), we can have $\min\left\{Q^B\left(s_{t+1}, a_K^*\right), Q^A\left(s_{t+1}, a^*\right)\right\} \leq Q^B\left(s_{t+1}, a_1^*\right)$. Since $a_1^*=b^*$, we can have $Q^B\left(s_{t+1}, a_1^*\right)=Q^B\left(s_{t+1}, b^*\right)$. Finally, due to $Q_t^A\left(s_{t+1}, a^*\right) \geq Q_t^A\left(s_{t+1}, b^*\right)$, we can know that:
	\begin{normalsize}
		\begin{equation} \label{policy}
		\begin{split}
		c_t=\gamma\min\left\{Q^B\left(s_{t+1}, a_K^*\right), Q^A\left(s_{t+1}, a^*\right)\right\}-\gamma Q_t^A\left(s_{t+1}, a^*\right)\leq \gamma Q^B\left(s_{t+1}, b^*\right)- \gamma Q^A\left(s_{t+1}, b^*\right).
		\end{split}
		\end{equation}
	\end{normalsize}
	Thus, $c_t$ converges to $0$ if $\Delta_t^{BA}$ converges to $0$ where $\Delta_{t}^{B A}\left(s_{t}, a_{t}\right)=Q_{t}^{B}\left(s_{t}, a_{t}\right)-Q_{t}^{A}\left(s_{t}, a_{t}\right)$.
	The update of $\Delta_t^{BA}$ at time $t$ is the sum of updates of $Q^A$ and $Q^B$:
	\begin{normalsize}
		\begin{equation} \label{policy}
		\begin{split}
		\Delta_{t+1}^{B A}\left(s_{t}, a_{t}\right) &=\Delta_{t}^{B A}\left(s_{t}, a_{t}\right)+\alpha_{t}\left(s_{t}, a_{t}\right)\left(y_t-Q_{t}^{B}\left(s_{t}, a_{t}\right)-\left(y_t-Q_{t}^{A}\left(s_{t}, a_{t}\right)\right)\right) \\ &=\Delta_{t}^{B A}\left(s_{t}, a_{t}\right)+\alpha_{t}\left(s_{t}, a_{t}\right)\left(Q_{t}^{A}\left(s_{t}, a_{t}\right)-Q_{t}^{B}\left(s_{t}, a_{t}\right)\right) \\ &=\left(1-\alpha_{t}\left(s_{t}, a_{t}\right)\right) \Delta_{t}^{B A}\left(s_{t}, a_{t}\right).
		\end{split}
		\end{equation}
	\end{normalsize}
	Clearly, $\Delta_t^{BA}$ converges to $0$, which then shows we have satisfied condition 3 of Lemma , which implies that $Q^A\left(s_t, a_t\right)$ converges to $Q^*\left(s_t, a_t\right)$. Similarly, wo get the convergence of $Q^B\left(s_t, a_t\right)$ to the optimal value function by choosing $\Delta_t=Q_t^B-Q^*$ and repeating the same arguments, thus proving Theorem 3.
\end{proof}

\section{C. Additional Experimental Results}
In this section, we provide some additional experimental results on Grid World, MinAtar and MuJoCo tasks.

\subsubsection{Grid World} 
In Grid World environment, for action candidate based clipped Double Q-learning (AC-CDQ), we further evaluate its performance on the grid world game with size $3 \times 3$ and $4 \times 4$. As shown in Fig~\ref{fig:gridworld}, benefiting from  the precise estimation about the optimal action value (closest to the dash line), AC-CDQ ($K=2$) presents the superior performance. 

\subsubsection{MinAtar} 
In MinAtar games, for action candidate based clipped Double DQN (AC-CDDQN), we additionally list the learning curves about the averaged reward and the estimated maximum action value with different numbers of the action candidates ($K=\left\{2, 3, 4\right\}$). As shown in Fig~\ref{fig:minatari_varyk}, except for the case that $K=4$ in Breakout game, our method can consistently perform better than clipped Double DQN. Moreover, as shown in the two plots on the right, our deep version can effectively balance the overestimated DQN and the underestimated clipped Double DQN. Further, it also
empirically follows the monotonicity in Theorem 1, that is as the number $K$ of action candidates decreases, the underestimation bias in clipped Double DQN reduces monotonically.

\subsubsection{MuJoCo Tasks}
In MuJoCo tasks, for action candidate based TD3 (AC-TD3), we provide the additional learning curves on Walker2D-v2, Pusher-v2, Hopper-v2 and Reacher-v2 in Fig~\ref{fig:mujoco}. Moreover, we also test the performance variance under different number of the action candidates ($K=\left\{32, 64, 128\right\}$) in Walker2D-v2, Pusher-v2, Swimmer-v2 and Ant-v2 in Fig~\ref{fig:mujoco_vary}. The plots show that AC-TD3 can consistently obtain the robust and superior performance with different action candidate sizes.

\begin{figure*}[ht]
	\centering
	\includegraphics[width=\textwidth]{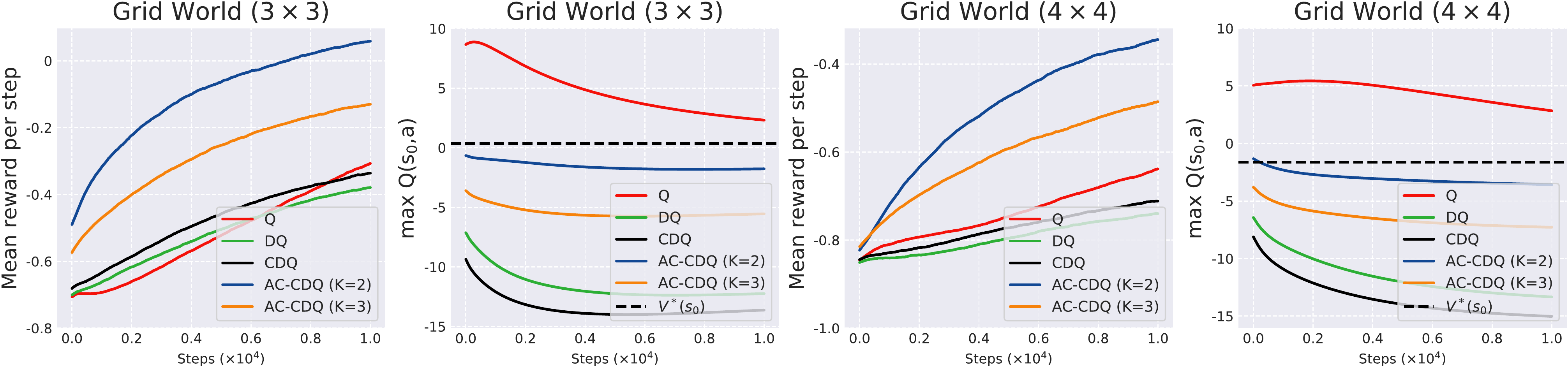}
	\vspace{-6mm}
	\caption{Learning curves about the mean reward per step and the estimated maximum action value from the state $s_0$ (the black dash line demotes the optimal state value $V^*(s_0)$). The results are averaged over 10000 experiments and each experiment contains 10000 steps. 
		We set the number of the action candidates to 2 and 3, respectively. Q: Q-learning, DQ: Double Q-learning, CDQ: clipped Double Q-learning.
	}	
	\label{fig:gridworld}
	\vspace{-5mm}
\end{figure*}

\begin{figure*}[ht]
	\centering
	\includegraphics[width=\textwidth]{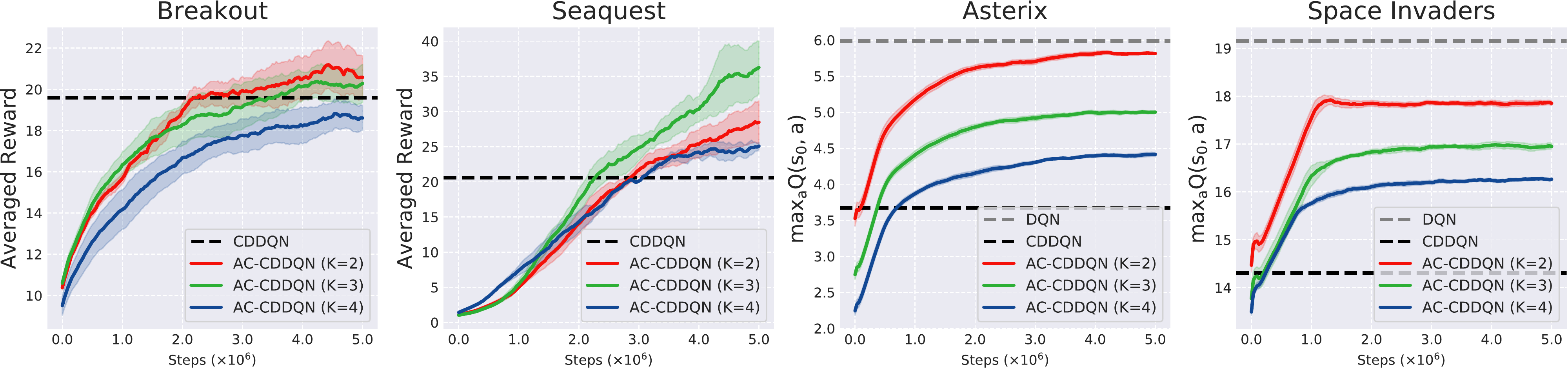}
	\vspace{-6mm}
	\caption{Learning curves about the averaged reward (two plots on the left) and estimated maximum action value (two plots on the right) for AC-CDDQN with different numbers of the action candidates (K=2, 3, 4). CDDQN: clipped Double DQN.
	}	
	\label{fig:minatari_varyk}
	\vspace{-3mm}
\end{figure*}

\begin{figure*}[ht]
	\centering
	\includegraphics[width=\textwidth]{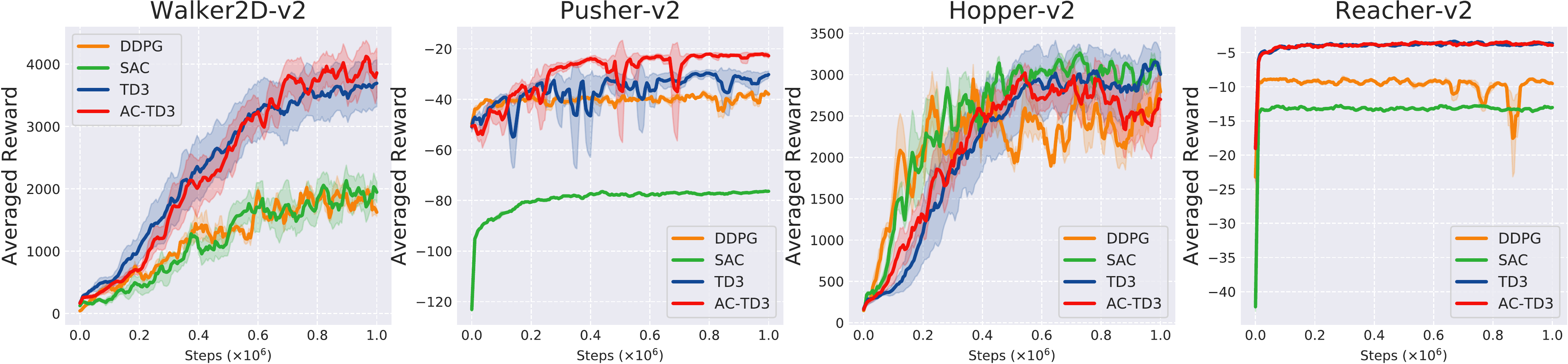}
	\vspace{-6mm}
	\caption{Learning curves for the OpenAI Gym continuous control tasks. The shaded region represents half a standard deviation of the average evaluation over 10 trials.
	}	
	\label{fig:mujoco}
	\vspace{-3mm}
\end{figure*}

\begin{figure*}[ht]
	\centering
	\includegraphics[width=\textwidth]{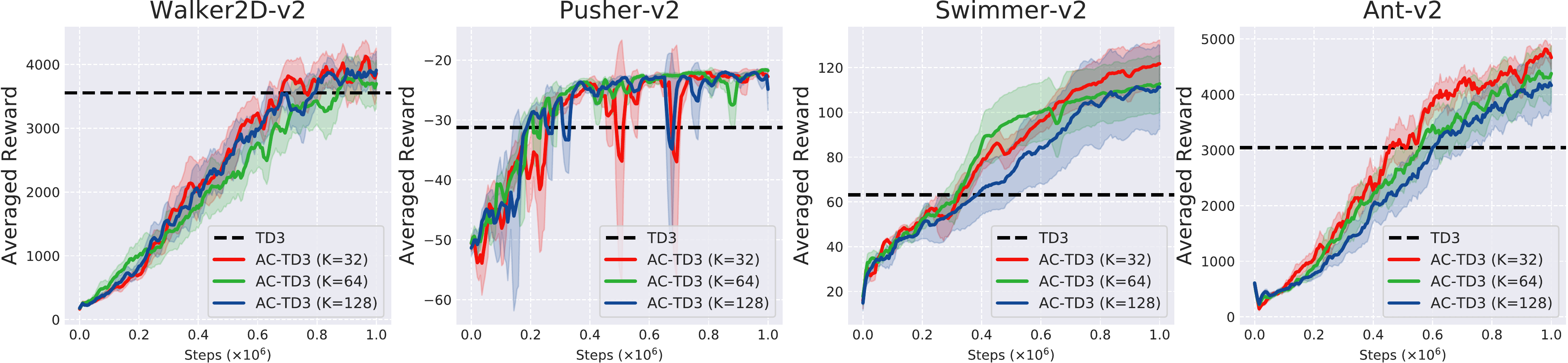}
	\vspace{-6mm}
	\caption{Learning curves for AC-TD3 with different numbers of the action candidates ($K=\left\{32, 64, 128\right\}$). The shaded region represents half a standard deviation of the average evaluation over 10 trials.
	}	
	\label{fig:mujoco_vary}
	\vspace{-3mm}
\end{figure*}

\section{D. Hyper-parameters Setting}
\subsubsection{Action Candidate Based Clipped Double DQN}
In this method, the number of frames is $5\cdot10^6$; the discount factor is $0.99$; reward scaling is 1.0; the batch size is $32$; the buffer size is $1\cdot10^6$; the frequency of updating the target network is $1000$; the optimizer is RMSprop with learning $2.5\cdot10^{-4}$, squared gradient momentum $0.95$ and minimum squared gradient $0.01$; the iteration per time step is $1$.

\subsubsection{Action Candidate Based TD3}
In this method, the number of frames is $1\cdot10^6$; the discount factor is $0.99$; reward scaling is 1.0; the batch size is $256$; the buffer size is $1\cdot10^6$; the frequency of updating the target network is $2$; the optimizers for actor and critic are Adams with learning $3\cdot10^{-4}$; the iteration per time step is $1$. All experiments are conducted on a server with NVIDIA TITAN V.

\end{document}